\documentclass{article}

\usepackage{microtype}
\usepackage{graphicx}
\usepackage{subcaption}
\usepackage{booktabs} 

\usepackage{hyperref}

\usepackage[preprint]{icml2026}

\usepackage{amsmath}
\usepackage{amssymb}
\usepackage{mathtools}
\usepackage{amsthm}
\usepackage{booktabs}  
\usepackage{tabularx}
\usepackage{graphicx} 
\usepackage{xcolor}
\usepackage[most]{tcolorbox}
\usepackage{multirow}
\usepackage{enumitem}

\usepackage[capitalize,noabbrev]{cleveref}

\usepackage{csvsimple}
\usepackage{booktabs}
\usepackage{caption}
\usepackage[table]{xcolor}

\theoremstyle{plain}
\newtheorem{theorem}{Theorem}[section]
\newtheorem{proposition}[theorem]{Proposition}

\theoremstyle{definition}

\theoremstyle{remark}

\usepackage[most]{tcolorbox}
\usepackage{xcolor}
\usepackage{wrapfig}

\newtcolorbox{summarybox}[1]{
  enhanced,
  frame hidden,             
  colback=gray!15,           
  sharp corners,            
  left=10pt,
  right=10pt,
  top=15pt,                 
  bottom=7pt,
  overlay={
    \node[
      anchor=west,
      fill=black!60,
      text=white,
      inner xsep=8pt,
      inner ysep=4pt,
      yshift=-1pt
    ] at ([xshift=0pt, yshift=-8pt]frame.north west)
    {#1};
  }
}

\usepackage[textsize=tiny]{todonotes}

\begin{document}

\twocolumn[
  \icmltitle{Demystifying Action Space Design for Robotic Manipulation Policies}



  \icmlsetsymbol{equal}{*}

  \begin{icmlauthorlist}
    \icmlauthor{Yuchun Feng}{equal,yyy}
    \icmlauthor{Jinliang Zheng}{equal,yyy,comp}
    \icmlauthor{Zhihao Wang}{yyy,pku}
    \icmlauthor{Dongxiu Liu}{yyy}
    \icmlauthor{Jianxiong Li}{yyy}
    \icmlauthor{Jiangmiao Pang}{comp}
    \icmlauthor{Tai Wang}{comp}
    \icmlauthor{Xianyuan Zhan}{yyy}
  \end{icmlauthorlist}

  \icmlaffiliation{yyy}{Institute for AI Industry Research (AIR), Tsinghua University}
  \icmlaffiliation{comp}{Shanghai AI Lab}
  \icmlaffiliation{pku}{Peking University}

  \icmlcorrespondingauthor{Tai Wang}{taiwang.me@gmail.com}
  \icmlcorrespondingauthor{Xianyuan Zhan}{zhanxianyuan@air.tsinghua.edu.cn}

  \icmlkeywords{Machine Learning, ICML}

  \vskip 0.3in
]



\printAffiliationsAndNotice{\icmlEqualContribution}

\begin{abstract}
The specification of the action space plays a pivotal role in imitation-based robotic manipulation policy learning, fundamentally shaping the optimization landscape of policy learning. While recent advances have focused heavily on scaling training data and model capacity, the choice of action space remains guided by ad-hoc heuristics or legacy designs, leading to an ambiguous understanding of robotic policy design philosophies. To address this ambiguity, we conducted a large-scale and systematic empirical study, confirming that the action space does have significant and complex impacts on robotic policy learning. We dissect the action design space along temporal and spatial axes, facilitating a structured analysis of how these choices govern both policy learnability and control stability. Based on \textbf{13,000+} real-world rollouts on a bimanual robot and evaluation on \textbf{500+} trained models over four scenarios, we examine the trade-offs between absolute vs. delta representations, and joint-space vs. task-space parameterizations. Our large-scale results suggest that properly designing the policy to predict delta actions consistently improves performance, while joint-space and task-space representations offer complementary strengths, favoring control stability and generalization, respectively.

\end{abstract}

\begin{figure*}[h]
    \centering
    \includegraphics[width=0.99\linewidth]{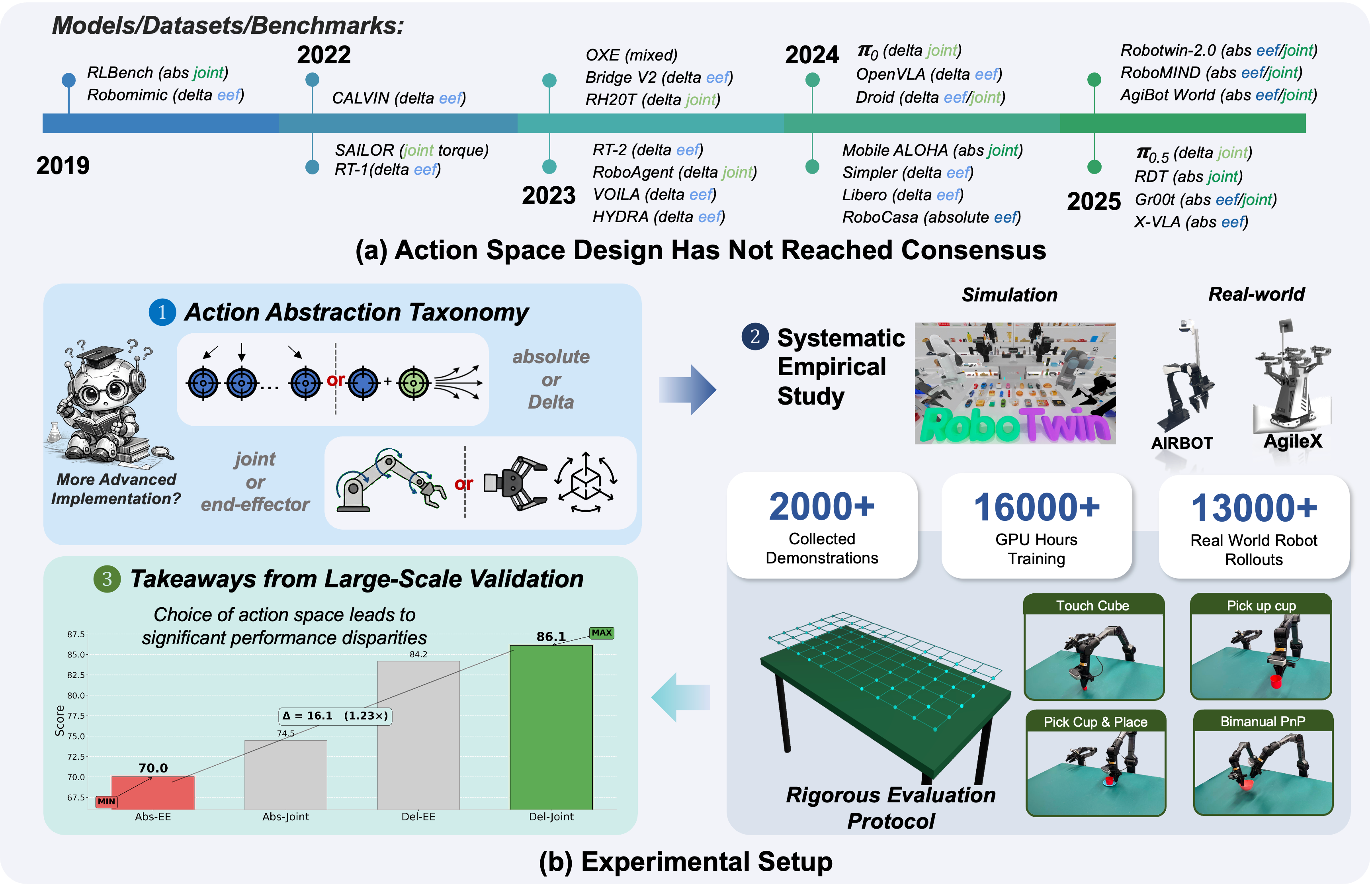}
    \caption{\small Overview of our study on action space design. (a) Historical analysis shows the divergent usage of action spaces (Absolute vs. Delta, Joint vs. EEF) in existing literature. (b) Our experimental setup includes an action abstraction taxonomy and a large-scale benchmark on both simulation and real-world platforms. We invest over 13,000 real-world rollouts to quantify the impact of these design choices, revealing significant performance gaps and identifying best practices for robotic manipulation under various scenarios.}
    \label{fig:timeline}
    \vspace{-10pt}
\end{figure*}

\section{Introduction}
\label{sec:intro}

Learning-based robotic manipulation policies have achieved remarkable progress in recent years, evolving from simple pick-and-place to dexterous, precision-critical tasks~\cite{brohan2022rt, brohan2023rt, chi2023diffusion, zhao2023learning, jang2022bc, zheng2025x}. While recent advances largely focus on scaling training data and model capacity~\cite{nvidia2025gr00tn1openfoundation, black2025pi,lin2025data}, the specification of \textit{action space}, which is the underlying interface bridging neural predictions and physical hardware, remains an overlooked yet critical determinant of success. As the primary supervision signal, the choice of action representation governs not only the learnability of the policy but also the stability of deployment~\cite{esser2024action,zheng2025universal}. Subtle changes in this interface can drastically alter the optimization landscape, or even distinguish a robust policy from one that fails to generalize~\cite{chi2023diffusion}.

Although important, as illustrated in Figure~\ref{fig:timeline}(a), the research community still has no consensus on the best practices of action space design over the past years. 
Historically, end-effector pose was favored for its semantic simplicity~\cite{liu2024libero}, yet recent trends have pivoted toward joint-space representations to bypass the numerical instabilities of \textit{inverse kinematics}~\cite{black2025pi, chen2025robotwin}. Crucially, the design space extends far beyond simple spatial parameterization. It also involves a combinatorial explosion of choices, including temporal representation (absolute vs. relative) and prediction horizons (e.g., action chunking). 
Currently, the field lacks a consensus or a unified understanding to navigate through these numerous choices. Researchers often rely on ad-hoc heuristics or legacy configurations inherited from different codebases, leading to a fragmented landscape where "state-of-the-art" results are often conflated with specific, undocumented control choices~\cite{bjorck2025gr00t}. Such ambiguity not only impedes reproducibility but also hampers the development of foundation models capable of cross-embodiment transfer.

Addressing this ambiguity is crucial for guiding the design of future robotic manipulation policies. While prior works~\cite{chi2023diffusion, esser2024action} have provided some initial insights, deriving comprehensive and reliable guidance for action space selection remains a non-trivial challenge. The difficulty stems from the substantial heterogeneity of robotic learning settings, the limited fidelity of simulation environments, and the high cost of real-world robotic evaluation. Consequently, existing studies are often limited in their empirical scope and lack systematic comparisons. As the field advances toward large-scale generalist robot models~\cite{team2025gemini}, the cost of suboptimal control interface design becomes increasingly consequential, underscoring the need for principled and unified design guidelines. To bridge this gap, we present the first large-scale, systematic empirical study that investigates how action space design impacts robotic policy learning. We begin by formalizing and dissecting action space design along two orthogonal axes: a temporal axis (absolute vs. delta parameterization, action chunking) and a spatial axis (joint-space vs. task-space control). We investigate how these design choices induce fundamental trade-offs between policy learnability, control stability, and deployment performance.

Building on this foundation, we conduct a comprehensive, large-scale experimental study across three platforms: the recently advanced simulation RoboTwin-2.0~\cite{chen2025robotwin} and two real-world robotics platforms, AgileX PiPER and AIRBOT, all under stringent evaluation protocols. Specifically, the main experiments focus on AgileX, where we design a benchmark suite consisting of four diverse tasks, ranging from precision-critical sanity checks to complex dynamic bimanual coordination. To ensure statistical significance and fair comparison, we introduce a grid-based spatial coverage strategy that standardizes initial conditions across all trials. Through extensive real-world experiments, we first identify the most effective implementation details for each action space design via targeted preliminary studies. We then perform controlled grid searches along several critical axes, including data scale, model expressiveness, and training duration. Overall, this study constitutes a substantial empirical undertaking, comprising over \textbf{2,000} collected demonstrations and more than \textbf{13,000} real-world rollouts across \textbf{500+} trained models. The main results, together with five carefully designed cross-validation experiments, reveal two core insights:

\begin{summarybox}{Summary}
\textbf{Temporal abstraction is decisive:} \textit{delta-based} action representations consistently outperform \textit{absolute} actions across all learning paradigms when implemented with appropriate modern practices.

\textbf{Spatial abstraction is setting-dependent:} \textit{Joint-space} control excels in standard scenarios with sufficient data, extensive training, and strong model capacity, while \textit{task-space} representations demonstrate superior performance in generalized settings like cross-embodiment and transfer learning.
\end{summarybox}




\begin{figure*}
    \centering
    \includegraphics[width=0.95\linewidth]{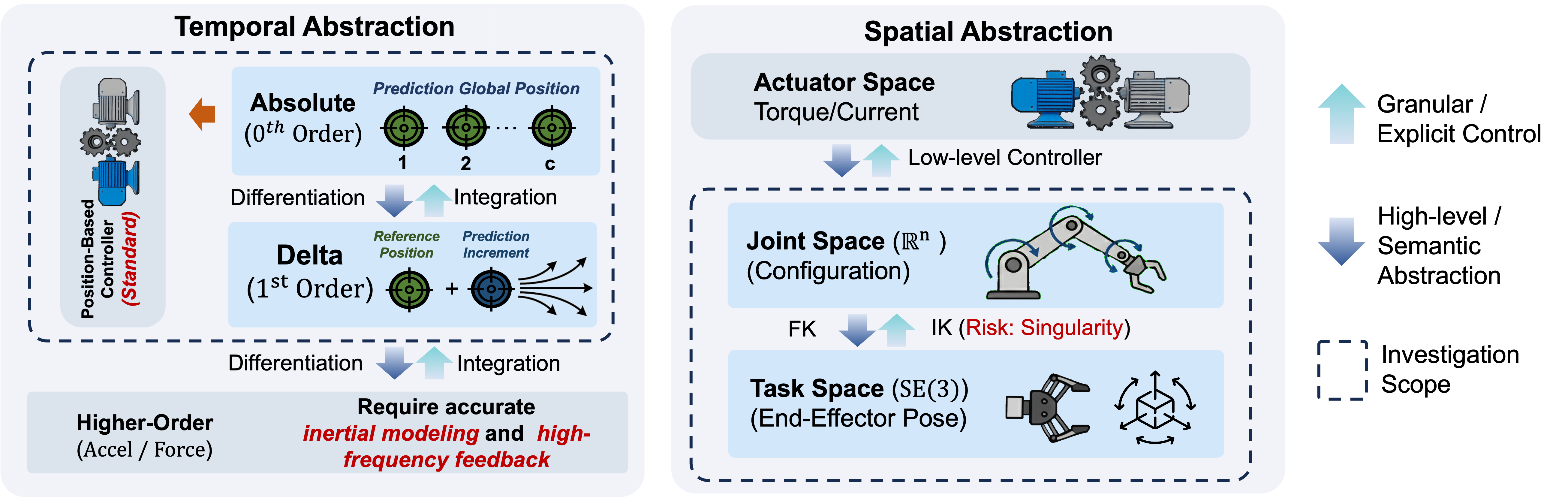}
    \caption{Hierarchy of the action space for robotic manipulation policies and its abstraction taxonomy}
    \label{fig:preliminary}
    \vspace{-7pt}
\end{figure*}

\section{Action Abstraction Taxonomy}
\label{sec:preliminary}
We consider the problem of imitation learning for robotic manipulation, formulated as learning a policy $\pi_\theta$ from expert trajectories $\mathcal{D} = \{ \tau_j \}_{j=1}^{M}, \; 
\tau_j = \{(o_t, a_t)\}_{t=1}^{N_j}$ that produces executable actions $a_t$ conditioned on observations $o_t$ at timestep $t$. While this formulation is standard~\cite{li2025roboticmanipulationimitationlearning}, the physical realization of an action can vary substantially depending on the control interface exposed to the policy. To reason about these differences in a principled manner, we decompose the action representation space along two orthogonal axes: \emph{spatial abstraction} and \emph{temporal abstraction}, as illustrated in Figure~\ref{fig:preliminary}. We begin by formalizing these abstractions and their impact on learning. Subsequently, we examine \emph{action chunking}, a pivotal technique in modern architectures, and its interplay with action space design (See Appendix~\ref{appendix:definition} for detailed formalizations).
\vspace{-5pt}
\subsection{Spatial Abstraction}
\textit{Spatial Abstraction} defines the abstraction boundary between the learned policy and the hardware controller. At the lowest level of spatial abstraction lies the actuator space (e.g., motor torques or currents), which directly governs the robot’s physical dynamics. However, direct torque-level supervision is rarely adopted for high-level manipulation policies, as it suffers from high dimensionality and poor sample efficiency~\cite{yu2025forcevla}. Accordingly, we restrict our scope to the two dominant kinematic abstractions used in practice: \emph{configuration space} (joint positions) and \emph{task space} (robot-based end-effector pose). While joint space and task space are kinematically equivalent via forward and inverse kinematics solvers, the choice of supervision domain induces distinct optimization landscapes. Task-space control provides a geometrically meaningful abstraction that aligns naturally with object-centric visual observations. However, it relies on inverse kinematics solvers during deployment, which introduces numerical singularities and error accumulation that can degrade execution robustness~\cite{lee1982robot}. In contrast, joint-space control avoids solving inverse kinematics, but this robustness comes at the cost of increased learning complexity: the policy must implicitly learn the robot's kinematic structure, mapping visual inputs onto a highly non-linear configuration manifold. Consequently, spatial abstraction presents a fundamental trade-off between learning alignment and execution robustness. 

\vspace{-5pt}
\subsection{Temporal Abstraction}

The second axis of analysis, orthogonal to the spatial axis, is \emph{temporal abstraction}, which specifies the order of temporal derivatives represented by the predicted action sequence.
At one end of the spectrum is \emph{absolute} representation ($0^{\text{th}}$-order), which specifies target states directly. At the other end are \emph{relative} or \emph{delta} representations ($1^{\text{st}}$-order), which specify state increments. 
It is worth noting that we adopt a position-based low-level controller as the interface, aligning with standard practices in the community~\cite{black2025pi}. Consequently, our ``$1^{\text{st}}$-order'' formulation refers to the \emph{semantic meaning} of the policy output rather than the physical control mode, decoupling high-level motion planning from low-level dynamic regulation. 
Although higher-order formulations like force control are possible, they generally rely on accurate inertial modeling and significantly increase system complexity~\cite{yu2025forcevla}. Consequently, we focus our analysis on absolute ($0^{\text{th}}$-order) and delta ($1^{\text{st}}$-order) representations, which governs a fundamental trade-off between learning stability and control accuracy. Under the absolute parameterization, the policy must map observations to global target states. This interface can encourage intuitive and precise grounding; however, it also requires the model to internalize complex real-world geometry and to cope with highly variable target distributions~\cite{chi2023diffusion, liu2025rdtb}, thereby inducing substantial learning difficulty. In contrast, a delta parameterization predicts relative increments, yielding a better-conditioned and closed-loop learning target. However, deploying delta actions makes the system more sensitive to feedback imperfections: noise, latency, and tracking errors can accumulate over time and lead to drift~\cite{zhang2025actionchunkingexploratorydata}.

\subsection{Action Chunking}
\vspace{-3pt}
\label{sec:action-chunking}
Throughout this work, we adopt \emph{Action Chunking}~\cite{zhao2023learning} as a default component. This technique has emerged as a cornerstone for action space shaping. By predicting a sequence of future actions, policies can better capture temporal dependencies, leading to substantial performance gains~\cite{zhang2025actionchunkingexploratorydata}. However, we identify that the introduction of chunking creates a non-trivial structural ambiguity, resulting in two critical design challenges that remain under-explored in the robotics literature: \\
\textbf{1. Ambiguity in Delta Alignment.} Integrating chunking with delta actions necessitates a choice of reference frame: \emph{step-wise delta} (relative to the immediately preceding predicted state within the sequence)~\cite{liu2024libero, mees2022calvin} versus \emph{chunk-wise delta} (relative to the robot's state at the start of the chunk)~\cite{black2025pi}. This choice fundamentally reshapes the action distribution. \\
\textbf{2. Horizon-Abstraction Coupling.} While the chunking horizon $k$ is typically optimized as an isolated hyperparameter, we hypothesize a fundamental coupling between $k$ and the choice of action abstraction. Specifically, delta-based control may necessitate shorter horizons to facilitate rapid correction, whereas absolute position control might benefit from longer horizons to maintain global spatial grounding. Elucidating the interplay between these factors is a prerequisite for designing a proper action space for policy learning.

\vspace{-8pt}
\section{Experimental Setup}
\label{sec:empirical_study}
\vspace{-2pt}
The goal of this study is to derive \textbf{practical and generalizable guidelines} for action space design. To ensure that these guidelines are robust across different scenarios, we conduct a large-scale empirical investigation spanning multiple hardware platforms, task configurations, and learning regimes. In this section, we summarize the common experimental components used throughout the paper, including model variations, hardware setups and evaluation protocol.
\vspace{-5pt}
\subsection{Model Architectures for Policy Learning}
\label{sec:model}

We aim to establish guidelines across a spectrum of paradigms, from specialized architectures like ACT~\cite{zhao2023learning} and Diffusion Policy~\cite{chi2023diffusion}, to foundation models like $\pi_0$~\cite{intelligence2504pi0}.

Following common practice~\cite{brohan2022rt}, we design a base architecture for policy learning. It uses a FiLM-conditioned ResNet-18 vision encoder~\cite{perez2018film} paired with a 6-layer Transformer decoder. Further details of the implementation are provided in Appendix~\ref{appendix:model}. To investigate the interplay between action space design and model expressiveness, we adopt two prominent generative modeling paradigms, resulting in the following two model variants, which correspond to implementation in ACT~\cite{zhao2023learning} and DP~\cite{chi2023diffusion}, respectively:
\textbf{1. Regression-based Policy} is optimized using a standard Mean Squared Error (MSE) loss:
\[\mathcal{L}_{\mathrm{R}} = \mathbb{E}_{(\mathbf{o}, \mathbf{a}) \sim \mathcal{D}} \left[ \left| \pi_\theta(\mathbf{o}) - \mathbf{a} \right|^2 \right],\]

\textbf{2. Flow Matching-based Policy} provides more powerful modeling for complex distributions by learning a velocity field \( v_\theta \) that transforms noise \( \epsilon \) into the expert action \( \mathbf{a} \):\\
\[\mathcal{L}_{\text{F}} = 
\mathbb{E}_{\tau \sim \mathcal{U}(0,1), \, (o,a) \sim \mathcal{D}} 
\Big[ \, \big\| v_\theta(a^\tau, o, t) - (a - \epsilon) \big\|^2 \, \Big],\]

where $\mathbf{x}_\tau = (1-\tau)\boldsymbol{\epsilon} + \tau \mathbf{a}$, and $\tau \sim \mathcal{U}(0,1)$. Common detailed training setups are provided in Appendix~\ref{appendix:model}, while specific learning regimes are discussed independently in the subsequent analysis sections. We further introduce a \textbf{Foundation Policy ($\pi_0$)}, designed specifically to investigate the transfer learning properties across different action spaces. Further discussion for action space design with pretraining-finetuning paradigm is provided in Appendix~\ref{appendix:limitation}.


\begin{figure*}[h]
    \centering
    \includegraphics[width=1\linewidth]{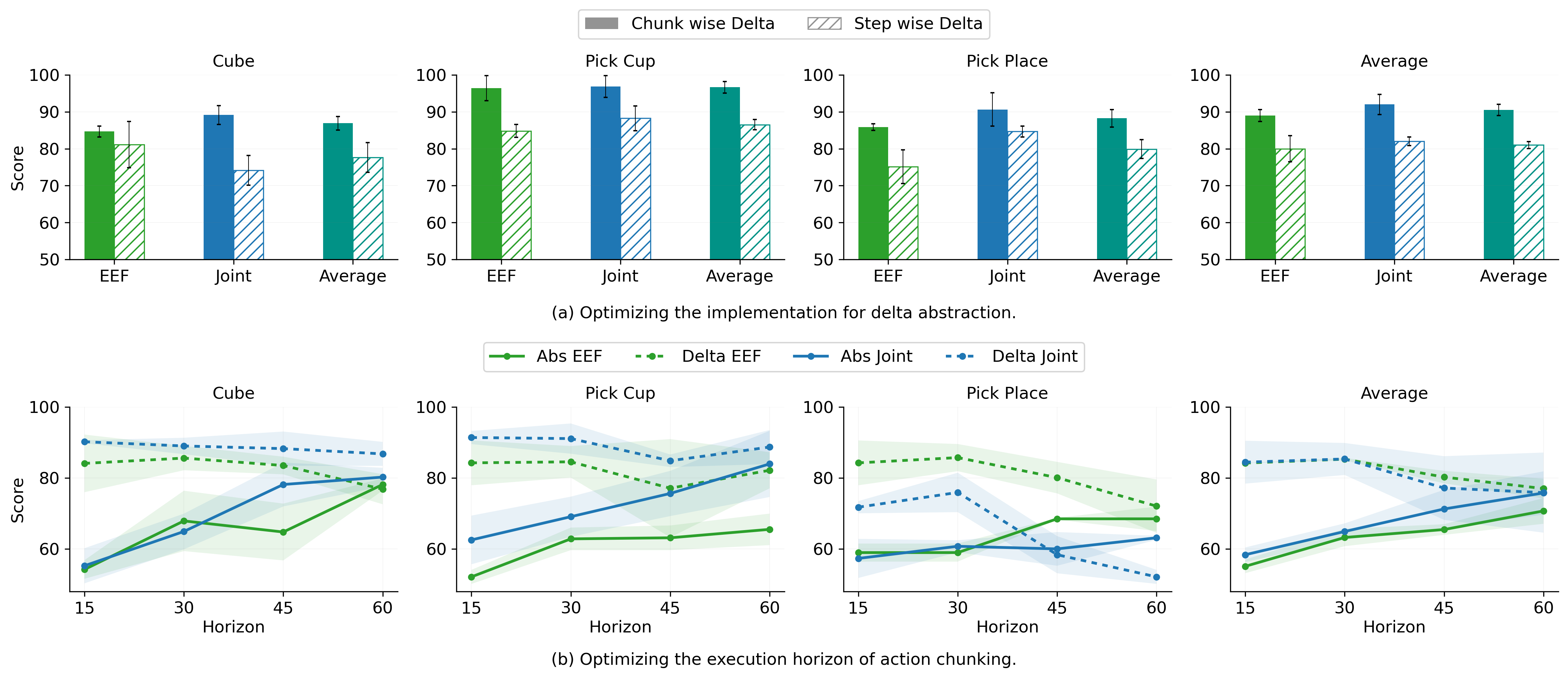}
    \caption{ (a) We verified that chunk-wise delta for both EEF and Joint perform better than step-wise delta representations. (b) Grid search over execution horizons across four different action space.}
    \label{fig:chunk}
    \vspace{-10pt}
\end{figure*}

\vspace{-5pt}
\subsection{Robotic Platforms and Evaluation Protocol}

Our experiments are conducted across four distinct robotic hardware configurations: (1) a single-arm AgileX platform serving as the primary setup for large-scale real-world experiments; (2) a dual-arm AgileX platform and (3) a single-arm AIRBOT platform, both of which are utilized to evaluate cross-platform generalizability; and (4) RoboTwin 2.0, a simulation benchmark designed for large-scale, reproducible experiments under controlled environments. The overview and detailed description of our experimental setup can be found in Figure~\ref{fig:timeline}(b) and Appendix~\ref{appendix:experimental-setup}, respectively.

\noindent\textbf{Evaluation Protocol for Real-World Experiments.} For the real-world experiments, we designed a curriculum consisting of four manipulation tasks: \texttt{Touch Cube}, \texttt{Pick Up Cup}, \texttt{Pick and Place Cup}, and \texttt{Bimanual Cube Transfer}. These tasks are characterized by increasing contact richness, temporal horizons, and coordination requirements. Detailed task descriptions are provided in Appendix~\ref{appendix:tasks}.
To ensure reproducibility and statistical significance, we implement a rigorous evaluation protocol to ensure spatial coverage. Specifically, the robot's workspace is uniformly partitioned into a $6 \times 6$ grid. Both data collection and testing procedures strictly adhere to this protocol by initializing object positions uniformly across these grids, thereby mitigating potential distribution shifts between training and evaluation. For each real-world experiment, we report progress score based on three independent trials, where each trial comprises 10 individual rollouts.

\noindent\textbf{Evaluation Protocol for Simulation.} We adopt the AgileX embodiment within RoboTwin 2.0 simulation (hard mode) and select a subset of 10 tasks from the 50 officially provided tasks. Aligned with our real-world evaluation protocol, we report the average success rate across three independent trials, with each trial consisting of 10 rollouts per task.

\vspace{-6pt}
\section{Results and Analyses}
In this section, we systematically evaluate different action space by addressing three progressive research questions (RQs), moving from foundational implementation nuances to generalizable trends and large-scale robustness:
\vspace{-7px}
\begingroup
\addtolength\leftmargini{-0.15in}
\begin{quote}
\textbf{RQ1 (Foundational Impact):} \emph{At the implementation level, how do specific choices in action space realization influence policy performance, and what constitutes the optimal configuration?}
\end{quote}
\begin{quote}
\textbf{RQ2 (Generalizable Trends):} \emph{Building upon these optimized implementations, can we identify consistent trends across diverse tasks that dictate the selection of superior action abstractions?}
\end{quote}
\begin{quote}
\textbf{RQ3 (Systemic Robustness):} \emph{Finally, do the identified trends remain robust when subjected to more advanced settings, such as scaled data regimes, foundation model transfer, and cross-embodiment learning?}
\end{quote}
\endgroup
\vspace{-10px}

\begin{figure*}[ht]
\centering
\begin{minipage}[t]{0.8\textwidth}
\vspace{0pt}
\centering

\captionof{table}{Quantitative comparison of progress scores and standard errors across embodiments and tasks. The results contrast Regression (ACT) and Flow Matching (DP) under four distinct control interface configurations. \textbf{Bold} and \underline{underlined} values denote the best and second-best performance for ACT and DP separately.}
\label{tab:main-results}
\resizebox{\textwidth}{!}{
\scriptsize
\begin{tabular}{lcccc|cccc}
\toprule
Task
& \multicolumn{2}{>{\columncolor{cyan!20}}c}{EE (ACT)}
& \multicolumn{2}{>{\columncolor{cyan!20}}c}{Joint (ACT)}
& \multicolumn{2}{>{\columncolor{green!20}}c}{EE (DP)}
& \multicolumn{2}{>{\columncolor{green!20}}c}{Joint (DP)} \\
\cmidrule(lr){2-3}\cmidrule(lr){4-5}\cmidrule(lr){6-7}\cmidrule(lr){8-9}
& abs & delta & abs & delta & abs & delta & abs & delta \\
\midrule

\rowcolor{gray!20}
\multicolumn{9}{c}{Single Arm AgileX} \\
\midrule
Cube & 77.1 $\pm$ 3.8 & \textbf{86.2} $\pm$ 3.5 & 77.2 $\pm$ 2.8 & \underline{84.8} $\pm$ 3.6 & 83.6 $\pm$ 2.0 & 91.5 $\pm$ 3.7 & \underline{95.5} $\pm$ 2.3 & \textbf{96.7} $\pm$ 1.8 \\
Pick & 63.1 $\pm$ 3.7 & \textbf{97.9} $\pm$ 2.1 & 83.9 $\pm$ 9.8 & \underline{95.2} $\pm$ 4.8 & 64.6 $\pm$ 7.5 & \textbf{97.9} $\pm$ 2.1 & 77.7 $\pm$ 8.6 & \underline{97.6} $\pm$ 2.4 \\
Pick Place & 66.8 $\pm$ 6.5 & \textbf{84.7} $\pm$ 6.4 & 70.8 $\pm$ 2.8 & \underline{83.8} $\pm$ 4.6 & 73.8 $\pm$ 1.2 & \underline{84.8} $\pm$ 1.9 & 81.9 $\pm$ 4.1 & \textbf{93.5} $\pm$ 0.3 \\
Average& 69.0 $\pm$ 2.0 & \textbf{89.6} $\pm$ 2.1 & 77.3 $\pm$ 2.8 & \underline{88.0} $\pm$ 2.9 & 74.0 $\pm$ 3.1 & \underline{91.4} $\pm$ 1.6 & 85.0 $\pm$ 2.3 & \textbf{95.9} $\pm$ 1.1 \\
\midrule
\rowcolor{gray!20}
\multicolumn{9}{c}{Single Arm AgileX (Multi)} \\
\midrule
Cube & \underline{89.7} $\pm$ 5.7 & \textbf{96.4} $\pm$ 3.6 & 94.4 $\pm$ 3.1 & 88.8 $\pm$ 3.5 & 96.4 $\pm$ 3.6 & 93.2 $\pm$ 4.1 & \underline{97.9} $\pm$ 2.1 & \textbf{100.0} $\pm$ 0.0 \\
Pick& 67.6 $\pm$ 9.2 & \underline{89.3} $\pm$ 4.0 & 77.7 $\pm$ 6.9 & \textbf{95.8} $\pm$ 4.2 & \underline{91.1} $\pm$ 2.7 & \textbf{100.0} $\pm$ 0.0 & 90.8 $\pm$ 6.4 & \textbf{100.0} $\pm$ 0.0 \\
Pick Place & 52.2 $\pm$ 10.0 & 72.3 $\pm$ 6.1 & \textbf{73.8} $\pm$ 7.3 & \underline{72.8} $\pm$ 2.8 & 75.0 $\pm$ 5.4 & 83.5 $\pm$ 2.6 & \underline{86.2} $\pm$ 4.3 & \textbf{93.3} $\pm$ 3.4 \\
Average& 69.8 $\pm$ 1.0 & \textbf{86.0} $\pm$ 3.1 & 81.9 $\pm$ 4.2 & \underline{85.8} $\pm$ 0.9 & 87.5 $\pm$ 2.8 & \underline{92.2} $\pm$ 2.3 & 91.6 $\pm$ 1.1 & \textbf{97.8} $\pm$ 1.1 \\
\midrule
\rowcolor{gray!20}
\multicolumn{9}{c}{Bimanual AgileX} \\
\midrule
Bowl & 63.7 $\pm$ 8.3 & \underline{67.0} $\pm$ 5.2 & 51.6 $\pm$ 7.7 & \textbf{69.6} $\pm$ 5.5 & 64.6 $\pm$ 7.5 & \textbf{75.3} $\pm$ 9.3 & \underline{74.7} $\pm$ 3.9 & 74.6 $\pm$ 5.2 \\
\midrule
\rowcolor{gray!20}
\multicolumn{9}{c}{RoboTwin 2.0} \\
\midrule
Average& 26.7 $\pm$ 3.3 & 33.3 $\pm$ 6.1 & \underline{40.0} $\pm$ 0.6 & \textbf{46.3} $\pm$ 1.9 & 26.0 $\pm$ 1.2 & \underline{37.0} $\pm$ 6.2 & 32.3 $\pm$ 2.6 & \textbf{48.0} $\pm$ 4.4 \\
\midrule
\rowcolor{gray!20}
\multicolumn{1}{l}{Overall Avg} & 63.4 $\pm$ 2.7 & \underline{78.4} $\pm$ 1.4 & 71.2 $\pm$ 2.9 & \textbf{79.7} $\pm$ 2.5 & 71.9 $\pm$ 4.8 & \underline{82.9} $\pm$ 1.6 & 79.6 $\pm$ 2.2 & \textbf{88.0} $\pm$ 2.3 \\
\bottomrule
\end{tabular}
}
\end{minipage}%
\hfill
\begin{minipage}[t]{0.175\textwidth}
\vspace{0pt}
\centering
\includegraphics[width=\textwidth]{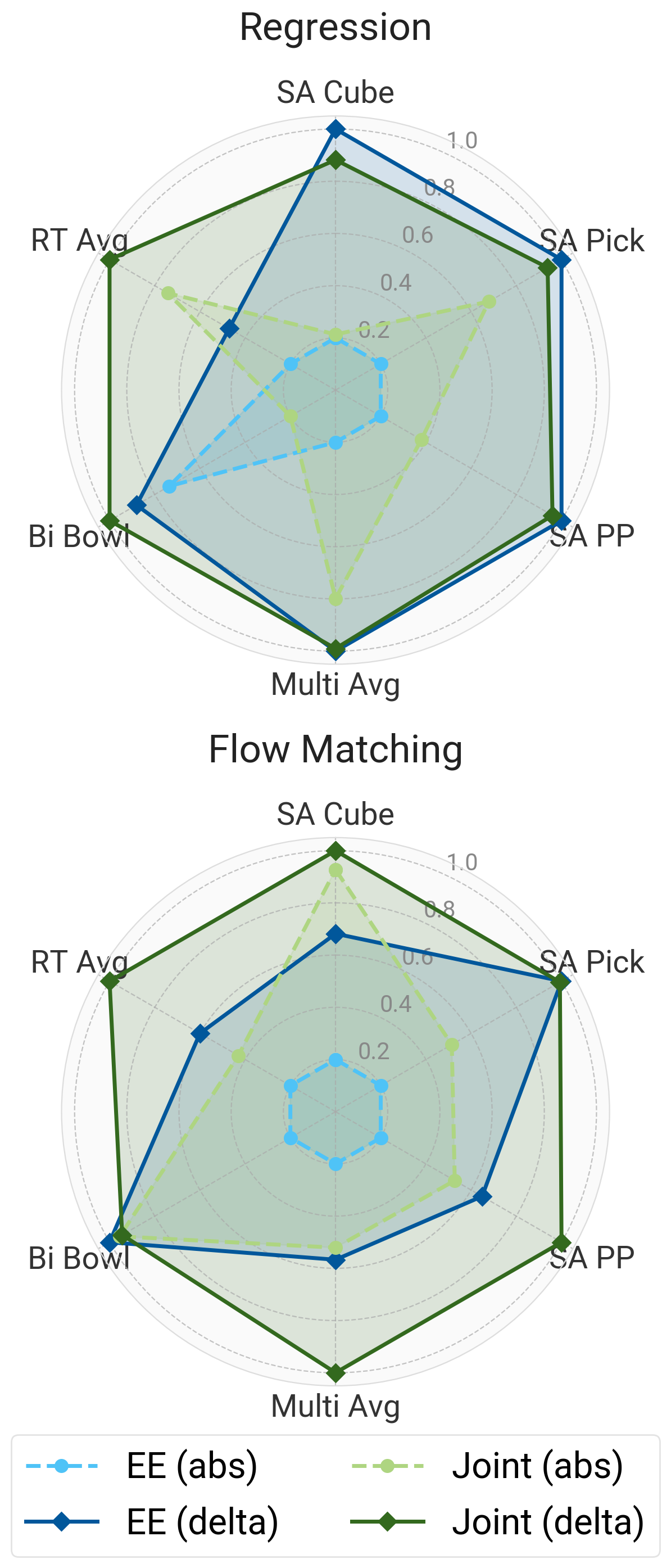}
\captionsetup{font=small}
\captionof{figure}{Normalized Score Comparison.}
\label{fig:main-results-vis}
\end{minipage}

\vspace{-10pt}
\end{figure*}

\subsection{RQ1: Implementation Nuances are Decisive}
\label{sec:preliminary-exp}

We begin by addressing the implementation ambiguities introduced by \emph{action chunking} as identified in Sec.~\ref{sec:action-chunking}. These implementation nuances are often overlooked but, as we demonstrate, are decisive for policy stability. We provide more implementation nuances in Appendix~\ref{appendix:cross-val}.

\subsubsection{Superiority of Chunk- vs. Step-wise Delta} 
\label{sec:step-delta}
We first conduct real-world experiments to evaluate the performance impact of chunk-wise and step-wise delta actions.
Fig.~\ref{fig:chunk}(a) reports the performance of a standard regression-based policy on a single-arm AgileX platform across three foundational tasks: \texttt{Touch Cube}, \texttt{Pick Up Cup}, and \texttt{Pick and Place Cup}. Extended cross-validation results and training details are available in Appendix~\ref{appendix:cross-val}. Our results demonstrate that \textbf{\textit{chunk-wise delta} consistently and significantly outperforms step-wise delta across all tasks}. Notably, the performance gap reaches upwards of \textbf{10\%} on average, a substantial margin that underscores the importance of alignment frame selection.

To explain the underlying mechanism, we analyze the action decoding process under a fundamental stability criterion: \textbf{\textit{a robust action representation should not amplify prediction errors during deployment.}} 

\begin{proposition}[Noise Amplification in Step-wise Integration]
\label{prop:error_prop}
Let $\boldsymbol{\epsilon} \in \mathbb{R}^{k}$ be the prediction noise for a chunk of length $k$, with bounded norm $\|\boldsymbol{\epsilon}\|_2 \le \delta$. 
The cumulative error in the decoded executable actions, denoted as $\mathbf{e}_{a}$, relates to $\boldsymbol{\epsilon}$ via a linear transformation matrix $\mathbf{M}$:\\
 (1) For \textbf{step-wise delta}, $\mathbf{M}_{\mathrm{step}} = \mathbf{L}_k$, where $\mathbf{L}_k$ is the $k \times k$ lower-triangular matrix of ones. The worst-case error bound scales linearly with the horizon:
    $
    \|\mathbf{e}_{a}\|_2 \le \|\mathbf{L}_k\|_2 \|\boldsymbol{\epsilon}\|_2 \approx \frac{2k+1}{\pi} \delta \sim \mathcal{O}(k).
    $ \\
(2) For \textbf{chunk-wise delta} and \textbf{absolute} action, $\mathbf{M} = \mathbf{I}_k$, implying independent error propagation:
    $
    \|\mathbf{e}_{a}\|_2 \le \|\mathbf{I}_k\|_2 \|\boldsymbol{\epsilon}\|_2 = \delta \sim \mathcal{O}(1).
    $
\end{proposition}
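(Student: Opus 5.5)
The plan has two parts: first derive the decoding maps explicitly to identify $\mathbf{M}$ in each case, then compute the spectral norms.

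Fix a reference state $s_0$, the robot state at the start of the chunk, which is known exactly at deployment. The policy outputs a chunk $\hat{\mathbf{d}} = \mathbf{d} + \boldsymbol{\epsilon}$, and we consider one scalar coordinate at a time, so that $\mathbf{d},\boldsymbol{\epsilon}\in\mathbb{R}^k$. The decoded executable targets are as follows.

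\begin{itemize}
\item \textbf{Step-wise delta.} $\hat a_i = \hat a_{i-1} + \hat d_i$ with $\hat a_0 = s_0$, so $\hat a_i = s_0 + \sum_{j\le i}\hat d_j$. In vector form $\hat{\mathbf{a}} = s_0\mathbf{1} + \mathbf{L}_k\hat{\mathbf{d}}$, and the error is $\mathbf{e}_a = \hat{\mathbf{a}}-\mathbf{a} = \mathbf{L}_k\boldsymbol{\epsilon}$.
\item \textbf{Chunk-wise delta.} $\hat a_i = s_0 + \hat d_i$, so $\mathbf{e}_a = \boldsymbol{\epsilon}$.
\item \textbf{Absolute.} $\hat a_i = \hat d_i$ directly, so again $\mathbf{e}_a = \boldsymbol{\epsilon}$.
\end{itemize}

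This identifies $\mathbf{M}$ in each case. The bound $\|\mathbf{e}_a\|_2\le\|\mathbf{M}\|_2\|\boldsymbol{\epsilon}\|_2$ is then just the definition of the operator norm, and it is attained for $\boldsymbol{\epsilon}$ aligned with the top right singular vector of $\mathbf{M}$, so it is a genuine worst case. Part (2) is then immediate, since $\|\mathbf{I}_k\|_2=1$. If several coordinates are present, the same argument applies coordinatewise, or equivalently with $\mathbf{M}\otimes\mathbf{I}_d$, which has the same spectral norm.

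The main work is computing $\|\mathbf{L}_k\|_2$, which I would do through the inverse. The inverse $\mathbf{L}_k^{-1}$ is the bidiagonal difference matrix $\mathbf{D}$, with $1$ on the diagonal and $-1$ on the subdiagonal. So $\|\mathbf{L}_k\|_2 = 1/\sigma_{\min}(\mathbf{D})$, and $\sigma_{\min}(\mathbf{D})^2 = \lambda_{\min}(\mathbf{D}^\top\mathbf{D})$.

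The matrix $\mathbf{D}^\top\mathbf{D}$ is tridiagonal, with $-1$ off the diagonal, $2$ on the diagonal except for a $1$ in the last diagonal entry. Its eigenvalues are known in closed form:
\[
\lambda_j = 2-2\cos\frac{(2j-1)\pi}{2k+1}, \qquad j=1,\dots,k.
\]
To verify this, I would use the ansatz $v_i=\sin(i\theta)$. The interior rows then hold automatically, and the boundary row at $i=k$ forces $\sin((k+1)\theta)=\sin(k\theta)$, i.e.\ $\theta=(2j-1)\pi/(2k+1)$. Matching this boundary condition is the main obstacle, and I would check it carefully.

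The smallest eigenvalue is $\lambda_1 = 4\sin^2\frac{\pi}{2(2k+1)}$. Hence
\[
\|\mathbf{L}_k\|_2 = \frac{1}{2\sin\frac{\pi}{2(2k+1)}} \approx \frac{2k+1}{\pi},
\]
using $\sin x\approx x$ for large $k$. This gives the stated $\mathcal{O}(k)$ growth.

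As a sanity check, elementary bounds already give linear growth without the exact constant. Testing with $\boldsymbol{\epsilon}=\mathbf{1}/\sqrt{k}$ gives the lower bound $\|\mathbf{L}_k\|_2\ge \|\mathbf{L}_k\mathbf{1}\|_2/\sqrt{k}\approx k/\sqrt3$. The upper bound $\|\mathbf{L}_k\|_2\le\|\mathbf{L}_k\|_F\approx k/\sqrt2$ follows from the Frobenius norm. Together these confirm the $\Theta(k)$ scaling.
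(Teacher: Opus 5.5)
Your proposal is correct and follows essentially the same route as the paper: invert $\mathbf{L}_k$ to the bidiagonal difference matrix $\mathbf{D}$, use $\|\mathbf{L}_k\|_2 = 1/\sigma_{\min}(\mathbf{D})$, take the smallest eigenvalue $4\sin^2\frac{\pi}{2(2k+1)}$ of the associated tridiagonal matrix, and apply $\sin x \approx x$. The differences are cosmetic. You work with $\mathbf{D}^\top\mathbf{D}$ (corner entry $1$ at the bottom) rather than the paper's $\mathbf{D}\mathbf{D}^\top$ (corner entry at the top), which has the same spectrum; and where the paper simply cites the closed-form eigenvalues, you verify them with the sine ansatz and add the explicit derivation of $\mathbf{M}$ and elementary $\Theta(k)$ sanity bounds.
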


\vspace{-5pt}
Proof and detailed analysis see Appendix~\ref{app:temp_theory}. As demonstrated in Proposition~\ref{prop:error_prop}, \textit{step-wise} integration inherently amplifies prediction noise as the horizon $k$ increases, whereas \textit{chunk-wise} and \textit{absolute} action \textbf{maintain a constant error bound}. This theoretical result corroborates our empirical findings, confirming that chunk-wise delta yields a structurally more reliable representation.


\subsubsection{Interplay With Horizon $k$} 
\label{sec:exp-horizon}
Building upon the optimized chunk-wise implementation, we investigate 
the chunking horizon $k$. Following the experimental setup described in Sec.~\ref{sec:preliminary-exp}, we conducted a grid search across horizons. Crucially, following the practice in $\pi$~\cite{black2025pi}, all policies were trained using a consistent, longer horizon of $k=60$ (2 seconds at 30 Hz) to ensure maximum supervision efficiency and temporal coherence. During inference, we then grid-searched the execution horizon from 15 to 60 to identify the optimal deployment window for each representation.

The results in Fig.~\ref{fig:chunk}(b) reveal an insightful phenomenon: \textbf{absolute control benefits from a significantly longer horizon, whereas delta control peaks at a shorter horizon}. This observation aligns with our hypothesis in Sec.~\ref{sec:action-chunking} regarding the sensitivity of relative representations to execution drift. Notably, in several tasks, we observe a saturation point even for absolute actions, where increasing the execution horizon no longer yields significant gains. This phenomenon suggests a potential \textbf{information decorrelation} effect (See Appendix~\ref{appendix:definition}). Managing the trade-off between the stability provided by long-horizon absolute grounding and the inherent information decay of distant predictions remains a compelling avenue for future research. We provide further discussion in Appendix~\ref{appendix:limitation}.

\begin{summarybox}{Takeaway}
\textbf{Implementation Nuances are Decisive}:

(1) \emph{Chunk-wise delta} is fundamentally superior to \emph{step-wise delta}. 
(2) Optimal horizons are critical and abstraction-dependent: \emph{delta} requires shorter windows while \emph{absolute} thrives with longer horizons.
\end{summarybox}

\begin{figure*}[ht]
    \centering
    \begin{minipage}{0.495\linewidth}
        \centering
        \includegraphics[width=\linewidth]{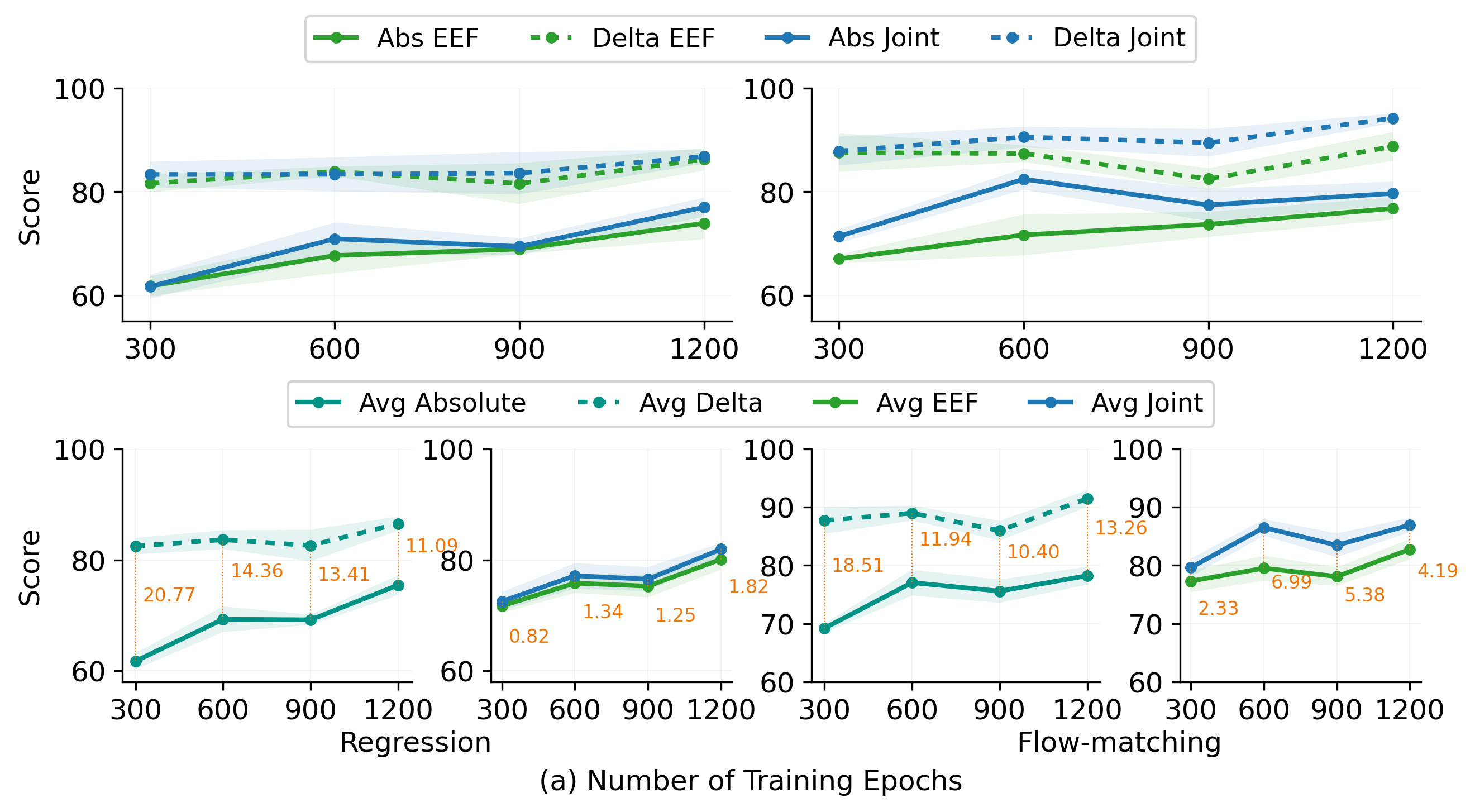}
        \label{fig:ep_left}
    \end{minipage}
    \hfill
    \begin{minipage}{0.495\linewidth}
        \centering
        \includegraphics[width=\linewidth]{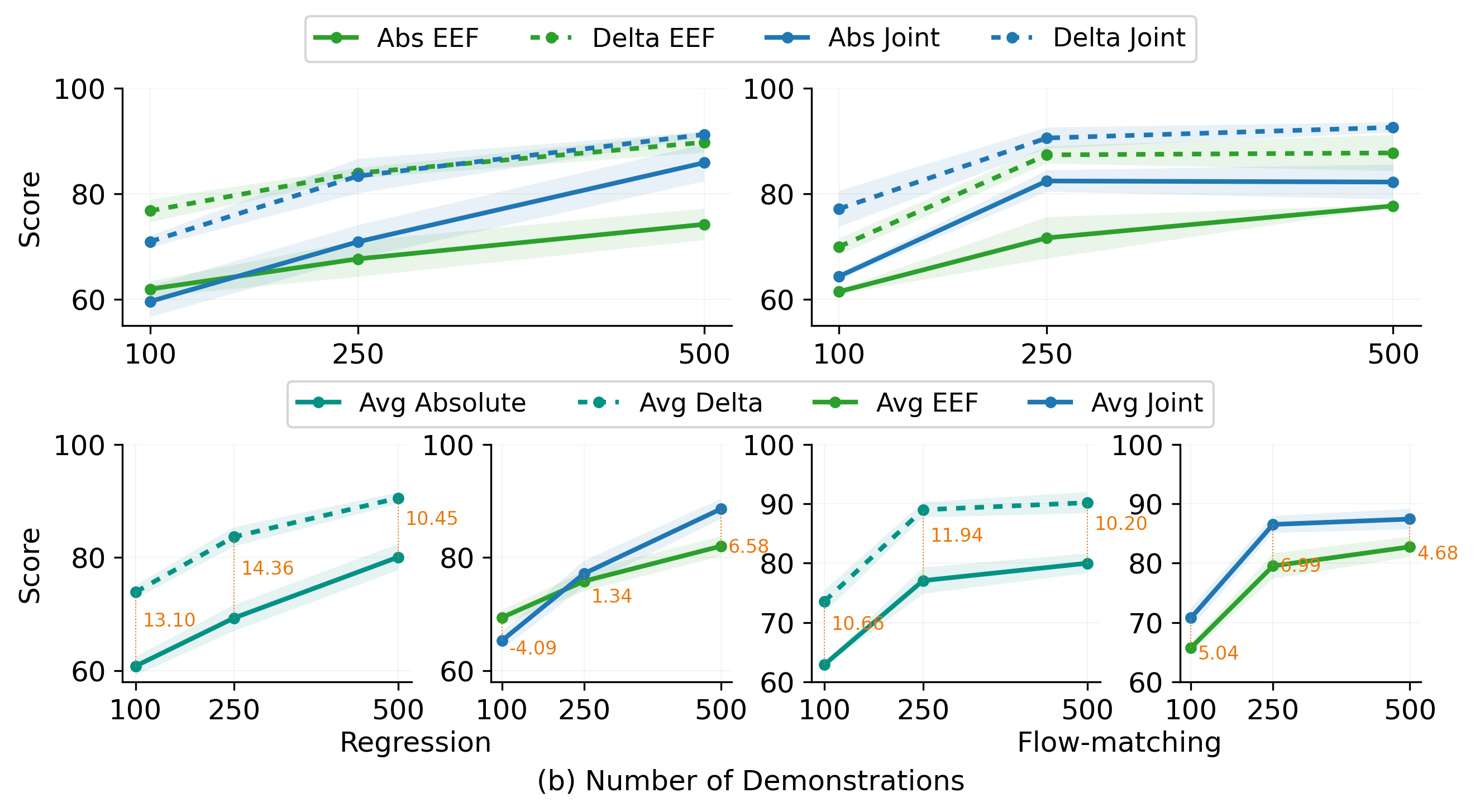}
        \label{fig:data_right}
    \end{minipage}
    \vspace{-15px}
    \caption{\small \textbf{Consistency of Action Space Superiority under Scaling.} We evaluate policy performance across varying (a) training epochs and (b) number of demonstrations. The top row illustrates the individual performance of the four action space, while the bottom row aggregates the performance across temporal (\textit{abs} vs. \textit{delta}) and spatial (\textit{task} vs. \textit{joint}) dimensions, providing an intuitive comparison.}
    \label{fig:scaling_all}
    \vspace{-12pt}
\end{figure*}

\subsection{RQ2: Systematic Trends in Action Abstraction}
\label{sec:RQ2}

With the implementation strategies optimized in RQ1, we standardize all delta-based actions to the \emph{chunk-wise} alignment frame. In addition, to accommodate the horizon-coupling effect identified in earlier analysis, we employ shorter execution horizons ($k = 30$) for \emph{delta} actions and longer horizons ($k = 60$) for \emph{absolute} actions for optimal performance. With this foundation, we then pivot to the central inquiry: \textit{how do different action space influence performance across diverse embodiments, model variations, and learning regimes?}

To investigate these dimensions systematically, we conducted extensive experiments across three platforms: a single-arm AgileX robot, a bimanual AgileX robot, and the RoboTwin-2.0 simulation environment. Our evaluation spans 14 distinct tasks in total: 4 real-world tasks and 10 simulation tasks, as described in Sec~\ref{sec:empirical_study}. For data collection, we utilized 250 expert demonstrations per task for real-world experiments and 50 per task for the RoboTwin-2.0. To ensure the generality of our findings, we evaluated both standard regression-based and flow-matching-based policy networks, each trained for 600 epochs. Furthermore, we introduced both single- and multi-task learning settings on the single-arm platform to examine how different abstractions withstand task interference and distribution shifts. Detailed experimental specifications are provided in Appendix~\ref{appendix:real-world-experimental-setup}.

The comprehensive experimental results are reported in Table~\ref{tab:main-results}. Additionally, we provide a radar plot (Fig.~\ref{fig:main-results-vis}) with normalized scores to highlight the difference and facilitate an intuitive comparison across different action spaces. In the subsequent sections, we conduct a deep dive into the spatial and temporal abstraction axes, independently analyzing the superiority of various action abstractions.

\subsubsection{Temporal Abstraction}
\label{sec:temporal}
We first conduct an in-depth analysis along the \textbf{temporal axis}. Even when utilizing the optimal implementation identified for both \emph{absolute} and \emph{delta} abstractions in RQ1, a substantial performance gap remains evident. Specifically, we observe that \textbf{with standard modern practice, \emph{delta} abstraction consistently and significantly outperforms \emph{absolute} abstraction across all platforms, task configurations, and model variations.}

We attribute this superiority to two primary factors:  (1) As discussed in Sec~\ref{sec:preliminary}, learning a direct mapping from high-dimensional visual observations to global coordinates presents a significant challenge. Even for policies with strong expressiveness and proper normalization, global coordinates in both \emph{Task} and \emph{Joint} spaces exhibit lower local coherence. In contrast, the properly implemented \textit{chunk-wise delta} actions allow the network to focus on the immediate displacement, which is a more tractable inductive bias. (2) There has been insufficient exploration of the horizon trade-off, as noted in Sec~\ref{sec:action-chunking}. While \textit{absolute} representations prefer a longer execution horizon to maintain global consistency, training policies with long horizons remains a complex challenge.

While we hope these findings encourage further exploration into \textit{absolute} representations to leverage their potential for precise global grounding, the deterministic conclusion from our current empirical study is that \textbf{\textit{delta} abstraction provides a more robust and sample-efficient foundation for modern imitation learning backbones.}

\subsubsection{Spatial Abstraction}

Along the axis of spatial abstraction, we observe \textbf{an overall performance superiority for actions in \textit{Joint space} compared to \textit{Task space}}. However, unlike the deterministic conclusion reached in the temporal analysis, this spatial superiority contains some inconsistency
across different platforms, tasks, and learning regimes. 

Amidst this complex and entangled data, we identify a particularly insightful phenomenon: \textbf{policies trained under the flow-matching generative paradigm exhibit a distinct excellence in \textit{Joint space} learning}. Specifically, as shown in Figure~\ref{fig:main-results-vis}, the performance envelope for flow-matching-based models significantly expands when transitioned from \textit{task} to \textit{joint space}. These results are closely aligned with our discussion in Sec~\ref{sec:preliminary}: as the action distribution in \textit{joint space} often resides on a complex, non-linear, and multi-modal hardware configuration manifold~\cite{bensadoun2022neuralinversekinematics}, powerful generative modeling is required to effectively capture the underlying structure. While standard regression backbones often struggle with such complexity and multi-modality, the flow-matching paradigm excels at modeling these intricate joint-space distributions, thereby unlocking the full potential of intrinsic control stability and preserving the essential kinematic meaning of \textit{Joint space}.

\begin{summarybox}{Takeaway}
(1) \textbf{\textit{Delta actions}} serve as a superior temporal abstraction for modern policy backbones. 
(2) \textbf{\textit{Joint-space control}} generally provides a more robust spatial representation, particularly when paired with \textbf{strong generative modeling} (e.g., diffusion).
\end{summarybox}
\vspace{-5pt}

\subsection{RQ3: Consistency and Scaling Analysis}
\label{sec:RQ3}
In this section, we extend our experiments to broader settings and larger-scale validations under varied control conditions. We investigate whether the conclusions derived in RQ2 hold firm when the learning regime is scaled in terms of data volume and computation budgets. Furthermore, we evaluate these abstractions within advanced learning setups, including transfer learning from pretrained robotics foundation models and cross-embodiment learning scenarios.

\begin{figure}[h]
    \centering
    \includegraphics[width=0.88\linewidth]{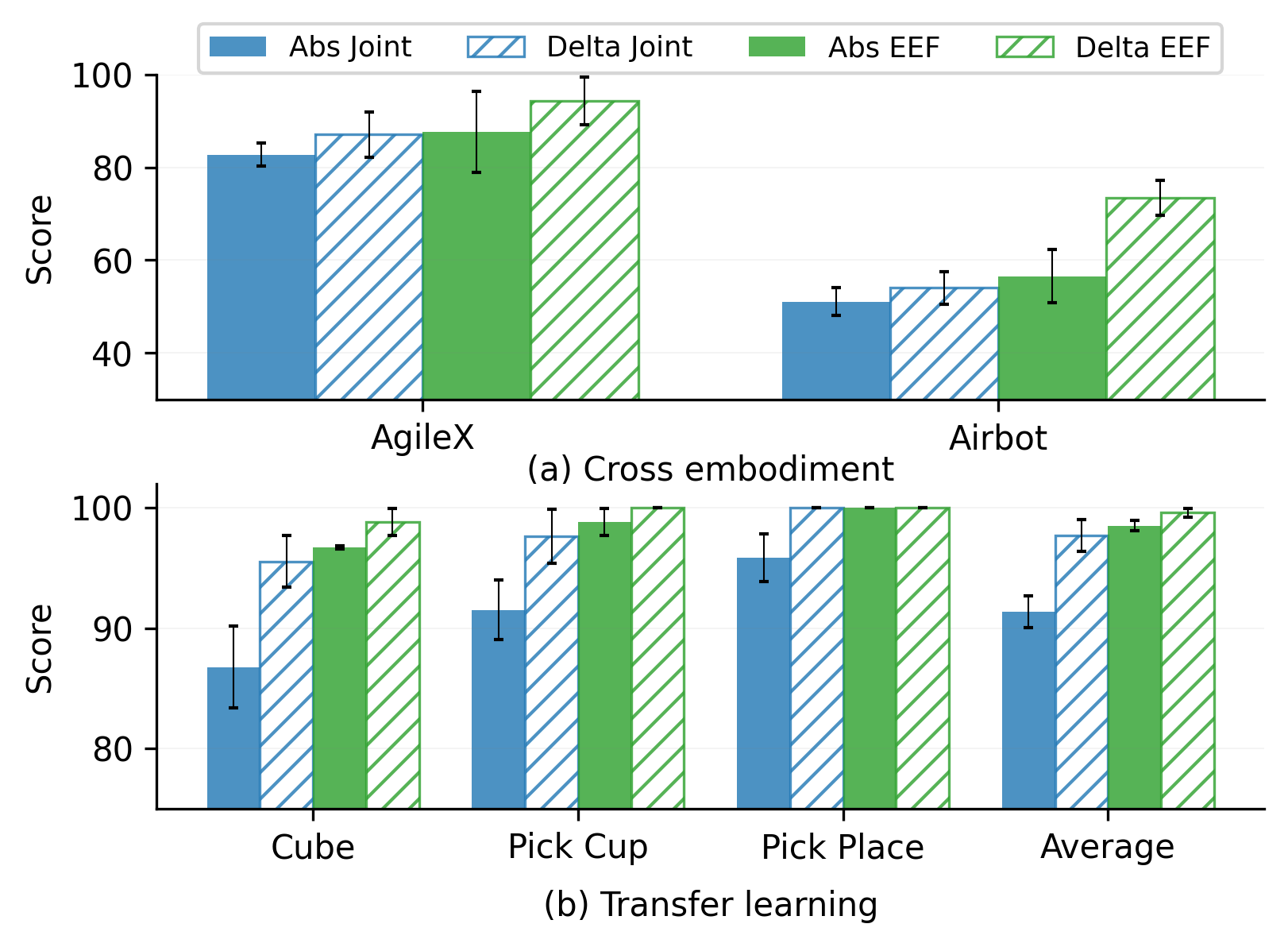}
    \vspace{-5pt}
    \caption{Comparison across different action space on advanced learning settings}
    \label{fig:transfer}
    \vspace{-20pt}
\end{figure}

\subsubsection{Scaling with Data and Compute}

We extended the RQ2 experiments to a much larger scale, covering all three hardware platforms, learning regimes, and model architectures. To evaluate \textit{Compute Scaling}, we introduced three additional training milestones at 600, 900, and 1200 epochs. For \textit{Data Scaling}, we varied the demonstration volume (100, 250, and 500 trajectories) for real-world experiments. The results of single-task learning on the AgileX platform across the four designed real-world tasks, aggregated in Figure~\ref{fig:scaling_all}, 
provide a high-fidelity map of action abstraction performance across varying resource budgets. Each data point in the figure represents the averaged outcome of 12 independent validation trials across different setups (totaling 120 real-world rollouts per point). Further exhaustive cross-validation is presented in Appendix~\ref{appendix:cross-val}.

Crucially, these results substantiate a definitive trend: \textit{delta actions} consistently serve as the superior temporal abstraction, albeit with marginal gains in certain settings, while \textit{joint-space actions} remain the most robust spatial representation in most cases. Notably, as training epochs and data volume increase, the superiority of \textit{joint-space} actions becomes increasingly pronounced, particularly for regression-based policies. This mirrors our observations in RQ2: while \textit{task-space} can be competitive in low-data or limited-compute regimes, \textit{joint-space} benefits disproportionately from stronger modeling capabilities and extensive training. This suggests that \textit{joint-space} representations are \textbf{fundamentally better suited to capturing the underlying kinematic manifold as the learning regime scales}.

\vspace{-5pt}
\subsubsection{Advanced Learning Regimes}
To align with increasing learning demands and the objective of pursuing a more generalizable embodied agent, this section examines two prominent learning regimes: cross-embodiment learning and transfer learning from foundation models. Specifically, we introduce a new embodiment, AIRBOT, for cross-embodiment learning and employ $\pi_0$ as the foundation model to perform transfer learning. We report the average cross-embodiment performance across the AgileX and AIRBOT platforms in Figure~\ref{fig:transfer}(a), and the transfer learning results across three single-arm tasks on AgileX in Figure~\ref{fig:transfer}(b). See Appendix~\ref{appendix:advanced-exp} for experimental details.

 The results again confirm the superiority of \textit{delta control} compared to \textit{absolute actions}. However, regarding spatial representations, we observe an interesting shift: under cross-embodiment and transfer learning settings, \textit{task-space} representations exhibit a more pronounced advantage and, in some cases, surpass \textit{joint-space} control. We attribute this behavior to the relatively embodiment-invariant nature of \textit{task-space} representations, which abstract away robot-specific kinematics and thereby facilitate knowledge transfer across different embodiments. These results highlight a complementary strength of \textit{task-space} control in scenarios where generalization across robots or tasks is prioritized over execution robustness within a fixed embodiment.

\begin{summarybox}{Takeaway}
(1) The superiority of \textbf{\textit{delta actions}} remains consistent across diverse learning regimes.
(2) \textbf{\textit{joint-space actions}} benefit exceptionally from stronger modeling capacity and extensive training.
(3) \textbf{\textit{task-space representations}} demonstrate a pronounced advantage in generalized settings, specifically within cross-embodiment and transfer learning regimes.
\end{summarybox}

\vspace{-10pt}
\section{Conclusion and Practical Implications}
\label{sec:conclusion}

In this work, we presented a systematic evaluation of action space design for IL-based robotic manipulation, decomposing the problem into orthogonal spatial and temporal axes. By conducting large-scale experiments across 13,000+ real-world rollouts and extensive testing in simulation environment, we establish that action space is far from a trivial implementation detail. Instead, our results reveal that action representation is a decisive configuration that interacts non-trivially with diverse learning regimes. We summarize our findings into the following actionable guidelines to standardize future research and deployment:




\begin{enumerate}[leftmargin=*, itemsep=0pt, topsep=0pt]
    \item The execution horizon $k$ of action chunking should not be treated as an isolated constant but must be adapted to the temporal abstraction.
    \item For standard imitation learning settings with sufficient resources where the primary objective is to maximize performance on a specific hardware platform (e.g., single-arm manipulation), the combination of \textbf{\textit{joint space}} and \textbf{\textit{chunk-wise delta}} yields the most robust results.
    \item When the objective shifts towards generalized setting like cross-embodiment or transfer learning, \textbf{\textit{task space} (EE)} becomes the superior spatial abstraction. 
    
\end{enumerate}

Despite the unprecedented scale of this study, our derived insights represent a first step towards a unified understanding of action spaces. We highlight several exciting directions that remain to be explored in Appendix~\ref{appendix:limitation}, paving the way for future research.

\section*{Acknowledgement}
This work is funded in part by the National Key R\&D Program of China (2022ZD0160201). This work is also supported by Wuxi Research Institute of Applied Technologies, Tsinghua University under Grant 20242001120, Shanghai Artificial Intelligence Laboratory, Xiongan AI Institute, and SunRisingAI Lab.

\bibliography{example_paper}
\bibliographystyle{icml2026}

\newpage
\appendix
\onecolumn

\section{Ethics and Reproducibility Statement}

In this paper, we employed Large Language Models (LLMs) \textbf{solely} for polishing the writing. No parts of the technical content, experimental results, or conclusions were generated by LLMs. All real-world data utilized in this study were collected using our custom hardware platform following the protocols detailed in the paper. Simulation data were generated using open-source environments. To support reproducibility, all code and datasets will be \textbf{open-sourced} upon publication. We confirm that our datasets contain no personally identifiable information and pose no risks regarding privacy violations or social bias.

\section{Limitations and Future Work}
\label{appendix:limitation}

In this section, we look beyond the scope of our current benchmarking to identify several prospective directions for action space design. While our study establishes a strong baseline, we believe the following areas hold significant promise for unlocking the next generation of robotic control.

\noindent \textbf{1. Beyond Rigid Taxonomies: Hybrid and Adaptive Representations.} 
Our current analysis operates within a fixed taxonomy (e.g., strictly \textit{Absolute} vs. \textit{Delta}, or \textit{Joint} vs. \textit{Task}). However, the optimal action space may not be static. A compelling avenue for future work is to explore \textbf{hybrid or adaptive action spaces} that dynamically switch representations based on the task phase. For instance, a policy might utilize \textit{task-space delta} actions for the reaching phase to maximize generalization, while seamlessly transitioning to \textit{joint-space absolute} actions for fine-grained manipulation or contact-rich stages. Learning to modulate these representations end-to-end could theoretically combine the stability of global grounding with the local precision of relative control. 
Another significant but under-explored dimension is the theoretical underpinning of \textit{Action Chunking}. While Sec.~\ref{sec:action-chunking} addresses the practical implementation nuances, our understanding of this mechanism remains empirical. Current strategies for selecting chunking horizons are heuristic and likely suboptimal. For instance, while our results demonstrate that extending the training horizon benefits \textit{absolute actions}, training with longer horizons introduces distinct, under-explored challenges regarding convergence stability and information decorrelation. A critical gap remains in rigorously formalizing \textbf{how action chunking reshapes the optimization landscape} to derive principled, rather than heuristic, horizon selection strategies.

\noindent \textbf{2. Scaling to High-DoF Morphologies, Dynamic and Dexterous Tasks.} 
While our experiments cover several standard platforms for robotic manipulation, the intrinsic connection between action space and \textbf{morphological complexity} warrants deeper investigation. It remains an open question whether our findings—specifically the superiority of delta joint actions—generalize to high-degree-of-freedom systems, such as humanoids or multi-fingered hands, where the kinematic manifold exhibits significantly higher dimensionality and non-linearity. Furthermore, extending this evaluation to highly dynamic or dexterous domains, such as table tennis or deformable object manipulation (e.g., cloth folding), could uncover critical constraints on action latency and horizon coupling that are less visible in pick-and-place tasks.

\noindent \textbf{3. Unifying Action Spaces for Generalization and Transfer.}
In Sec.~\ref{sec:RQ3}, we conduct a preliminary investigation into the interplay between action space design and cross-embodiment generalization. Our results, spanning cross-embodiment learning across distinct morphologies and transfer from the $\pi_0$ foundation model, demonstrate a \textbf{pronounced superiority} of \textit{task space (EE)} control. This advantage is primarily attributable to its inherent embodiment invariance. Nevertheless, this observation requires broader validation. A critical next step is to extend the analysis to a wider range of embodiment pairs and foundation models, with particular attention to how different pretraining paradigms, such as vision language alignment and pure behavioral cloning, interact with action space selection. More specifically, an important open question is whether aligning pretraining supervision directly in \textit{joint space} can reduce the transfer gap and potentially challenge the current dominance of task-space representations in foundation models.


\section{Related Work}
\label{sec:related}

\textbf{Learning-Based Robotic Manipulation Policies.} Robotic manipulation policies have made remarkable progress in recent years, advancing from simple atomic pick-and-place tasks~\cite{kim2024openvla, brohan2022rt, brohan2023rt, o2023open}, to long-horizon sequential tasks~\cite{duvideo, brohan2023can, fang2025robix, shihi}, fine-grained contact-rich manipulations~\cite{chi2023diffusion, yu2025forcevla, zhang2025ta}, and even complex dexterous skills~\cite{zhaoaloha, intelligence2504pi0, bjorck2025gr00t, team2025gemini, zhao2023learning}. These policies are typically formulated as end-to-end models: given images, robot proprioceptive states, and optionally language prompts or other modalities such as point clouds~\cite{li2025pointvla, xue2025demogen, ze20243d} or force feedback~\cite{zhang2025ta, yu2025forcevla}, the model must generate \textit{actions} that directly control the robot. Auxiliary supervision signals, such as future image prediction~\cite{cheang2024gr, bjorck2025gr00t, cheang2025gr, ye2024latent, lidecisionnce, wen2023any, liuefficient}, object detection~\cite{zheng2024instruction, team2025gemini}, or sub-language planning~\cite{black2025pi, shihi, lin2025onetwovla}, are introduced to leverage web knowledge and improve generalization. However, \textit{ACTION} is the only modality that can be the interface for robotics models to interact with the 3D world, and thus is the indispensable modality for robotic learning that ultimately governs execution performance, making it one of the most critical design choices in policy learning~\cite{vqbet, chi2023diffusion}.

\textbf{Chaotic Control Interfaces for Robotic Policies.} The physical representations of actions vary widely, providing numerous options for supervision. A fundamental choice is whether to represent actions in joint space that directly control motors, or in End-Effector (EEF) space, which specifies the gripper’s position and orientation in 3D space~\cite{Doshi24-crossformer}.
At first glance, these two representations may seem interchangeable, since forward kinematics (FK) maps joints to EEF poses and inverse kinematics (IK) provides the reverse mapping. However, as we show in this paper, joint-based and EEF-based actions exhibit markedly different training behaviors and preferences. Furthermore, actions can be parameterized in different orders, such as 0th-order positions~\cite{liu2025rdtb, chi2023diffusion, zhao2023learning}, 1st-order velocities~\cite{team2024octo, Doshi24-crossformer}, or even higher-order derivatives, adding further variability. To date, no clear consensus has emerged on which representation is most effective, where most different works adopt different interfaces without concrete reasons~\cite{chi2023diffusion, zhao2023learning, intelligence2504pi0, Doshi24-crossformer, chi2024universal}. 

\section{Model Implementation and Training Details}
\label{appendix:model}

In this section, we provide more details about the model implementation and training procedure. The overview of our implemented model architecture is illustrated in Fig~\ref{fig:model-arch}. Following prior works~\cite{liuefficient, zhao2023learning, chi2023diffusion}, our model comprises a \textbf{FiLM-conditioned ResNet-18} backbone that injects language features into visual representations, and a \textbf{Transformer-based encoder-decoder} for action generation. Our architecture supports two prominent training paradigms as described in Sec~\ref{sec:empirical_study}: (1) \textbf{Regression} trained with L2 loss for direct action prediction, and (2) a \textbf{Flow-Matching method} that predicts the target vector field for generative modeling.

Regarding the training setup, unless otherwise specified, we follow the hyperparameters listed in Table~\ref{tab:traing_setting} to \textbf{train all models}. All training procedures are conducted using 8 NVIDIA A100 GPUs, \textbf{resulting in an overall computational cost exceeding \textbf{16,000 GPU-hours}.}

\begin{figure}[h]
    \centering
    \begin{minipage}{0.48\linewidth}
        \centering
        \resizebox{\linewidth}{!}{
        \begin{tabular}{l|c}
        \toprule
        Configuration & Value \\
        \midrule
        Optimizer & AdamW \\
        Batch size & 512 \\
        Learning rate & $1 \times 10^{-4}$ \\
        LR Scheduler & CosineAnnealingLR \\
        Weight decay & 0.01 \\
        Optimizer momentum & $\beta_1, \beta_2 = 0.9, 0.95$ \\
        Model precision & float32 \\
        \midrule
        Image Resize & 224x224 \\
        Image Augmentation & ColorJitter(0.2, 0.2, 0.2, 0) \\ 
        \bottomrule
        \end{tabular}
        }
        \vspace{10pt}
        \captionof{table}{Hyperparameters for model training.}
        \label{tab:traing_setting}
    \end{minipage}
    \hfill
    \begin{minipage}{0.5\linewidth}
        \includegraphics[width=1.0\linewidth]{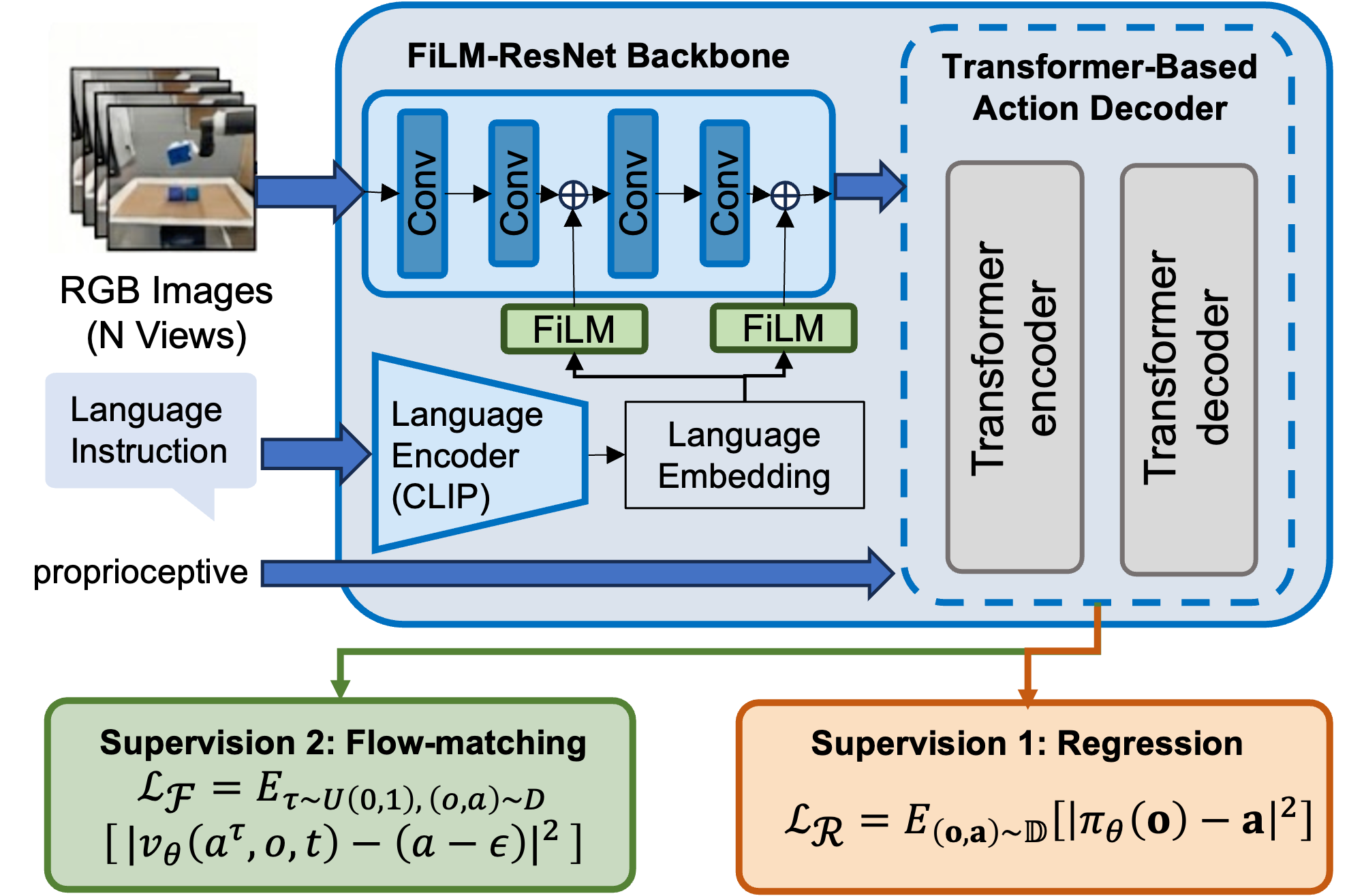}
        \captionof{figure}{Overview of Model architecture.}
        \label{fig:model-arch}
    \end{minipage}
\end{figure}

\begin{figure}
    \centering
    \includegraphics[width=0.7\linewidth]{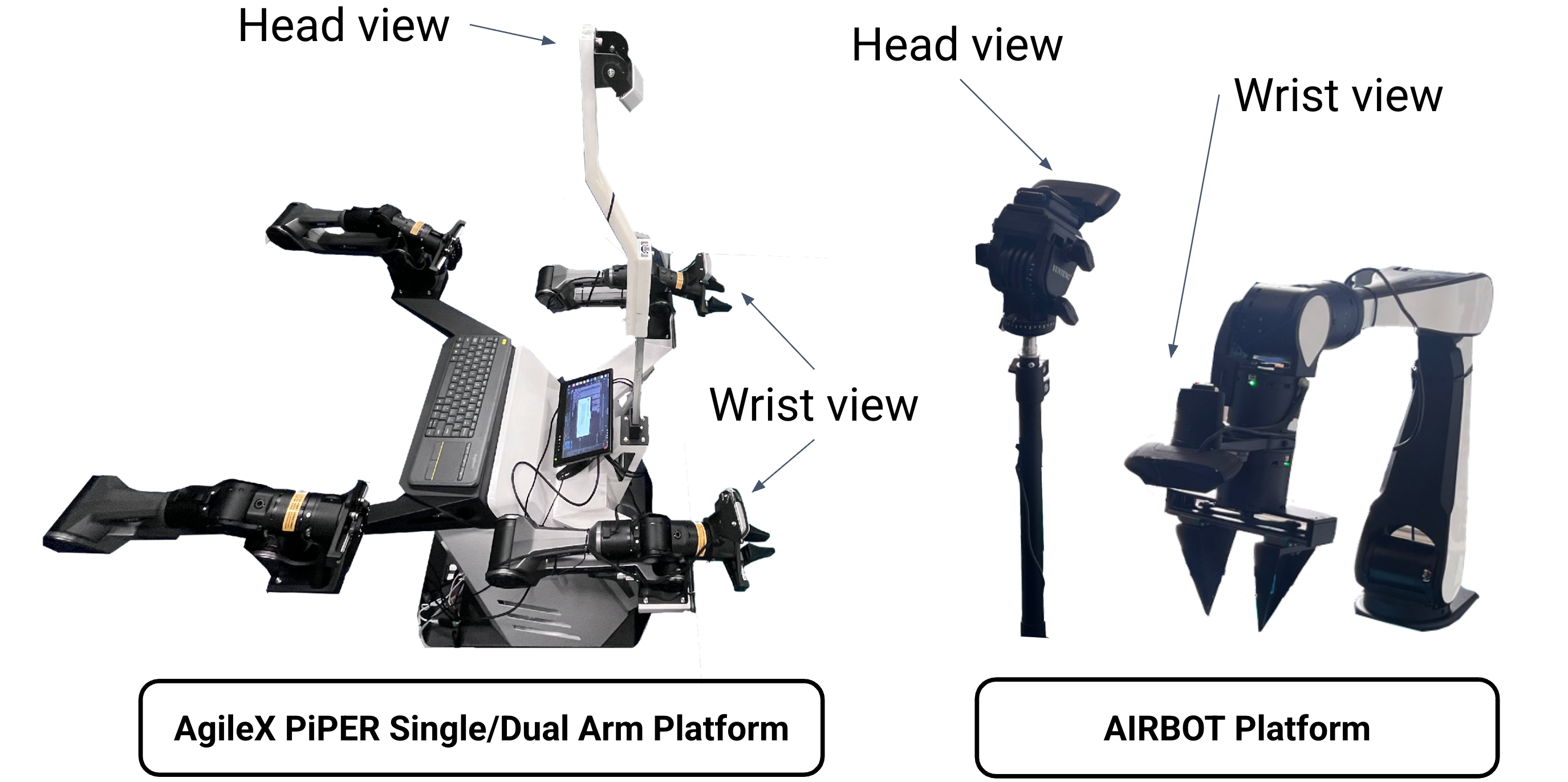}
    \caption{Overview of hardware setup for real world experiments}
    \label{fig:hardware}
\end{figure}

\label{appendix:tasks}
\begin{table}[h]
    \centering
    \small
    \caption{\textbf{Task Curriculum, Complexity Progression, and Success Criteria.} We design four tasks with increasing difficulty to stress different failure modes of action representations and define rigorous evaluation metrics for each.}
    \label{tab:task_design_extended}
    \begin{tabularx}{\linewidth}{l X X X} 
        \toprule
        \textbf{Task} & \textbf{Goal} & \textbf{Progress Score } & \textbf{Evaluation Focus} \\
        \midrule
        \textbf{Touch Cube} 
        & \textbf{Sanity Check}: Move from a random pose to touch a fixed cube. 
        & 0.5 for aiming the correct direction; 0.75 for touching the cube corner; 1.0 for touching the top surface.
        & Isolates \textit{spatial precision} from dynamics; validates basic convergence. \\
        \midrule
        \textbf{Pick Cup} 
        & \textbf{Grasping}: Grasp and lift a target cup to a height of $10\,\text{cm}$. 
        & 0.5 for touching the cup; 1.0 for picking up the cup.
        & Introduces \textit{contact dynamics} and gripper timing coordination. \\
        \midrule
        \textbf{Pick \& Place} 
        & \textbf{Sequential}: Grasp a cup and place it onto a target plate. 
        & 0.5 for touching the cup; 0.75 for picking up the cup but failing to put it on the plate; 1.00 for successfully placing it on the plate.
        & Sensitive to \textit{temporal drift} and error accumulation over a long horizon. \\
        \midrule
        \textbf{Bimanual Transfer} 
        & \textbf{Coordination}: Transfer a cube from the left arm to a bowl in the right arm. 
        & 0.25 for touching either the cube or the bowl; 0.5 for pick up either; 0.75 for picking up both but failing to put the cube into the bowl; 1.0 for aiming correctly.
        & Requires \textit{inter-arm coordination} and precise relative positioning. \\
        \bottomrule
    \end{tabularx}
\end{table}

\section{Details of Experimental Setup}
\label{appendix:experimental-setup}
\subsection{Real-World Experiments}
\label{appendix:real-world-experimental-setup}
In this section, we provide a detailed description of the hardware configurations and task suite used to evaluate the performance of different action spaces in real-world settings. Our empirical study is conducted on three distinct hardware platforms across four tasks, as summarized in Table~\ref{tab:task_design_extended}. Visualizations of the hardware setups are provided in Fig.~\ref{fig:hardware}.

\textbf{Single-Arm AgileX PiPER:} 
The single-arm PIPER is a lightweight 6-DoF robotic manipulator, which we use as a baseline platform for single-handed manipulation tasks. The setup is equipped with a third-person camera and a wrist-mounted camera to capture global visual context and fine-grained local information, respectively. This platform evaluates the model’s ability to execute precise and dynamic actions, including \textit{Touch Cube}, \textit{Pick Up Cup}, and \textit{Pick and Place Cup}, without the added complexity of bimanual coordination.

\textbf{Dual-Arm AgileX PiPER Platform:} 
Built upon the AgileX robotics platform, this setup is designed to assess fine-grained control in a bimanual manipulation setting. In addition to a third-person camera, each arm is equipped with a wrist-mounted camera, enabling active perception and coordinated two-arm behaviors. This platform targets contact-rich scenarios that require accurate timing and cross-arm coordination, exemplified by the \textit{Bimanual Transfer} task.

\textbf{AIRBOT:} 
AIRBOT is a 6-DoF robotic arm characterized by its cost-effectiveness and kinematic structure that differs from the PIPER arms. We include this platform to evaluate the cross-morphology generalization capability of different action spaces on the \textit{Touch Cube} task.

\subsection{Simulations}
We use RoboTwin-2.0 as the simulation platform to evaluate the performance of different action spaces under an idealized hardware configuration. Specifically, we adopt the AgileX embodiment provided by RoboTwin-2.0 and select 10 representative manipulation tasks: \textit{Adjust Bottle}, \textit{Dump Bin Bigbin}, \textit{Grab Roller}, \textit{Lift Pot}, \textit{Move Playingcard Away}, \textit{Open Laptop}, \textit{Place Burger Fries}, \textit{Place Container Plate}, \textit{Press Stapler}, and \textit{Shake Bottle Horizontally}. \textbf{The task set is chosen based on preliminary experiments, excluding tasks that are either saturated or excessively difficult, in order to provide a balanced evaluation across varying levels of complexity.} The source of expert demonstrations is the official data generation tool provided by RoboTwin-2.0. The process is fully transparent and reproducible.

\subsection{Transfer Learning with $\pi_0$}
\label{appendix:advanced-exp}

To evaluate transfer capabilities, we fine-tune the pre-trained $\pi_0$ policy using Low-Rank Adaptation (LoRA) utilizing the official codebase. The experimental suite includes three tasks: \textit{Touch Cube}, \textit{Pick Cup}, and \textit{Pick \& Place}. Demonstrations from all tasks are merged into a unified dataset and sampled jointly during training, performing a multi-task learning paradigm. We conduct experiments across all four combinations of spatial and temporal abstractions to analyze their distinct effects. Training is performed for $30{,}000$ steps with a batch size of $32$, with checkpoints saved and validated every $10{,}000$ steps. All other hyperparameters align with the settings described in Table~\ref{tab:traing_setting}.

\section{Cross Validation}
\label{appendix:cross-val}

We conduct extensive experiments to substantiate the claims made in this work. While the main text reports the most representative results, this section presents a more comprehensive set of additional experimental evidence to cross-validate our findings and provide further support for our conclusions.

\subsection{Cross-Validation of Chunk-Wise vs.\ Step-Wise Delta Actions}

In addition to the regression-based results reported in Sec.~\ref{sec:preliminary-exp}, we further evaluate the \emph{step-wise delta} interface on the \textit{Cube}, \textit{Cup}, and \textit{Pick and Place} tasks, and compare its performance against the \emph{chunk-wise delta} interface using a flow-matching-based backbone. As shown in Fig.~\ref{fig:dp_delta}, both chunk-wise delta end-effector and joint-space representations consistently outperform their step-wise delta counterparts.

\begin{figure*}[h]
    \centering
    \includegraphics[width=1\linewidth]{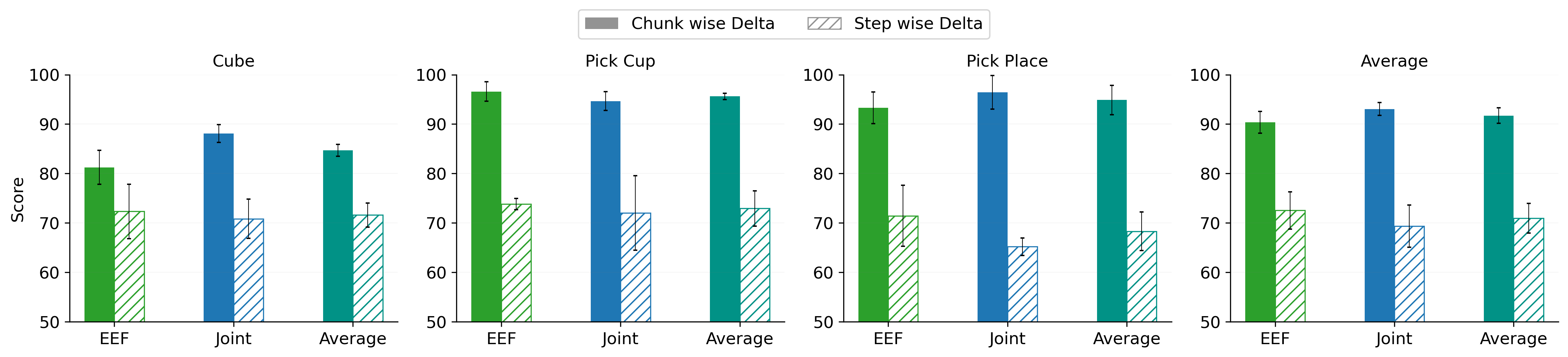}
    \caption{Cross-validation with a flow-matching backbone. Chunk-wise delta representations, in both end-effector and joint space, consistently outperform step-wise delta action interfaces.}
    \label{fig:dp_delta}
\end{figure*}

\begin{figure*}[ht]
    \centering
    \begin{minipage}{0.495\linewidth}
        \centering
        \includegraphics[width=\linewidth]{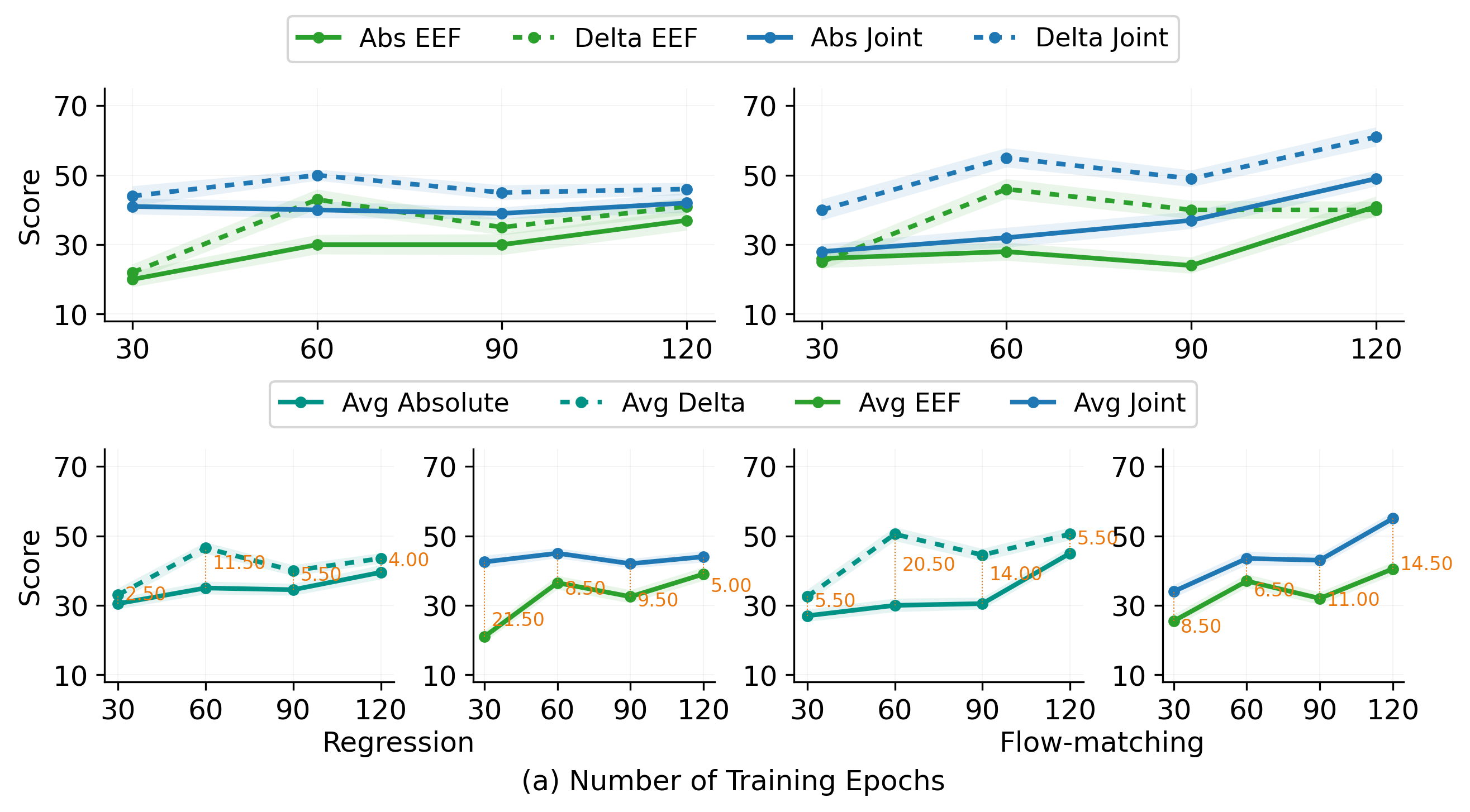}
        \centerline{\small (a) Performance vs. Training Epochs}
        \label{fig:ep_left_robo}
    \end{minipage}
    \hfill
    \begin{minipage}{0.495\linewidth}
        \centering
        \includegraphics[width=\linewidth]{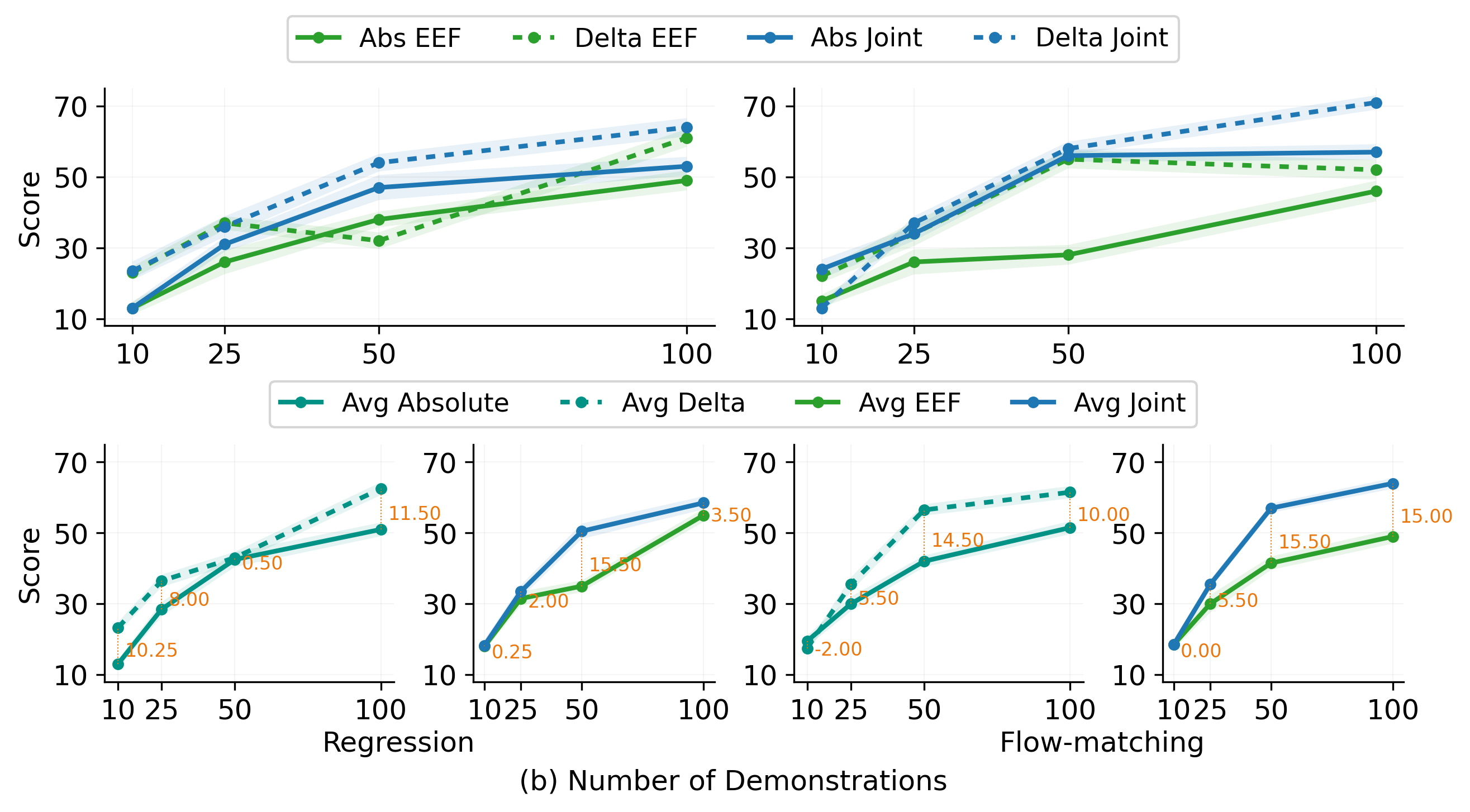}
        \centerline{\small (b) Performance vs. Data Scale}
        \label{fig:data_right_robo}
    \end{minipage}
    \caption{\small \textbf{Results on RoboTwin 2.0 cross-validate the scaling laws of action abstractions.} Comparison of policy performance across (a) training epochs and (b) demonstration scales for regression-based and flow-matching-based backbones. These results confirm that the proposed method scales effectively with both increased compute and data.}
    \label{fig:scaling_robotwin}
\end{figure*}

\begin{figure}
    \centering
    \includegraphics[width=0.6\linewidth]{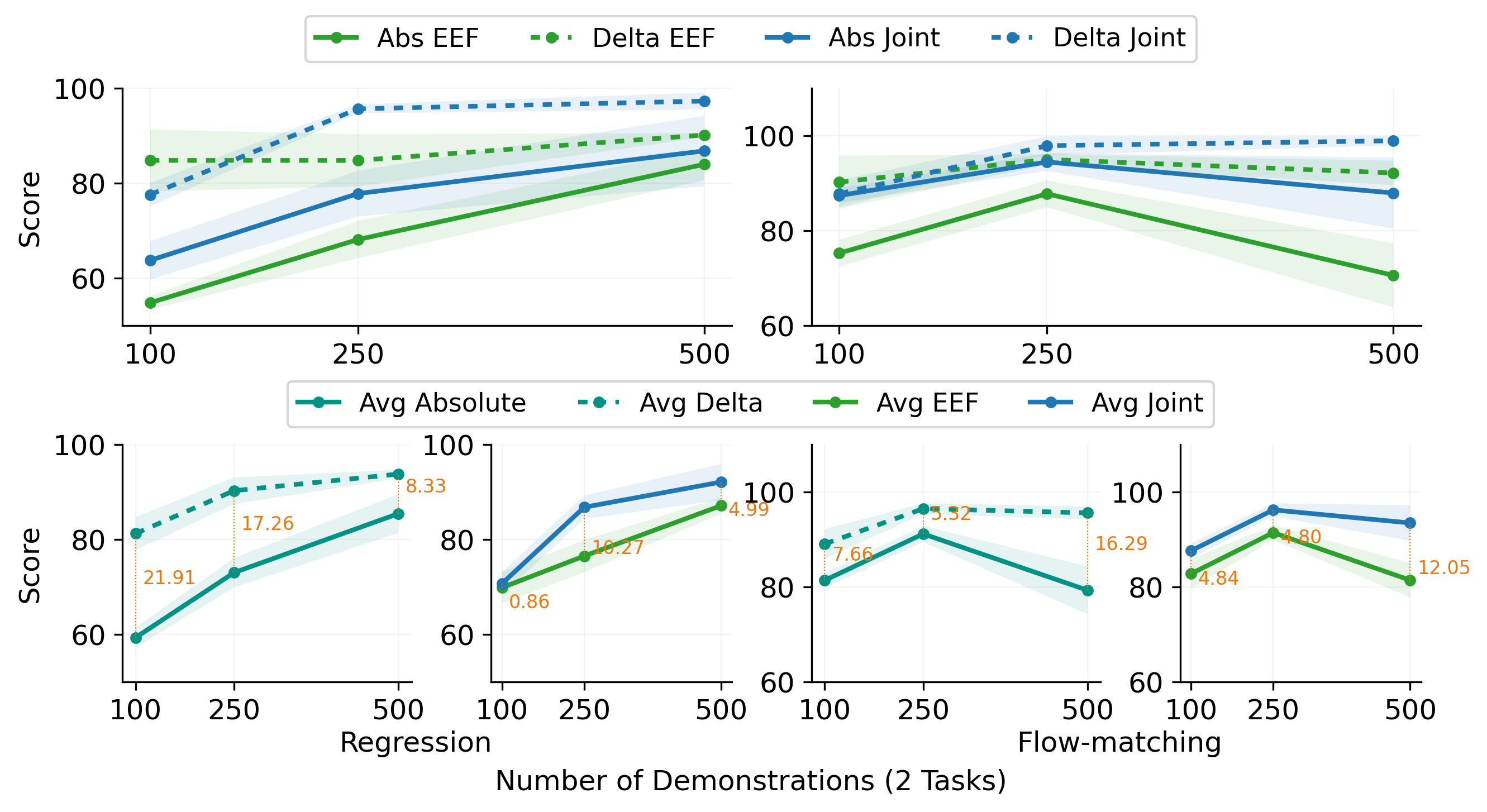}
    \caption{Scaling experiments on multi-task learning}
    \label{fig:multi-task-scaling}
\end{figure}

\subsection{Simulation Validation: Consistency across Data and Compute Scaling}
\label{sec:sim_validation}

To verify that our observations regarding action abstractions are not artifacts of specific real-world hardware setups, we conducted large-scale consistency checks in simulation using the RoboTwin 2.0 benchmark. Simulation allows us to rigorously test the scaling laws of our method with significantly larger datasets and more extensive training horizons than are feasible in physical experiments.

As illustrated in Fig.~\ref{fig:scaling_robotwin}, we observe trends that are highly consistent with our real-world findings:

\begin{itemize} \item \textbf{Temporal Abstraction:} \textit{delta} actions consistently dominate \textit{absolute} actions across all data volumes and training epochs, confirming that relative motion control provides a more stable learning signal regardless of the regime. \item \textbf{Spatial Abstraction:} Similar to the real-world results, \textit{joint-space} representations exhibit superior scaling properties. While Task-space control remains competitive in low-data regimes, Joint-space performance improves significantly as data volume increases, eventually outperforming Task-space by a clear margin. \end{itemize}

These simulation results provide strong evidence that our conclusions on action space selection are fundamental to the learning dynamics of embodied agents, rather than being specific to a particular robot morphology or physical environment.

\subsection{Cross-Validation in Multi-Task Settings}
\label{sec:multitask_validation}

Beyond single-task mastery, we further validate our approach in a multi-task learning (MTL) setting to ensure that our findings are robust to task interference and varying task distributions. We train a single, unified policy co-conditioned on multiple distinct manipulation tasks to assess how well different action abstractions handle the increased complexity across different data volumes and training epochs. The results, reported in Fig.~\ref{fig:multi-task-scaling}, indicate that the established trends remain robust.



\newpage

\newpage

\section{Formal Definition and Discussion on Action Space Design}
\label{appendix:definition}
We address the problem of imitation learning for robotic manipulation, formally modeled as learning a policy $\pi_\theta$ that maps observations $\mathbf{o}_t$ to low-level deployable joint commands $\mathbf{u}_t \in \mathbb{R}^{d_q}$ at timestep $t$. 

The policy produces a latent sequence $\mathbf{Z}_t \in \mathbb{R}^{c \times d_a}$, which is subsequently transformed through a two-stage process: \textit{temporal decoding} into an executable sequence $\tilde{\mathbf{A}}_{t} \in \mathbb{R}^{c \times d_a}$, followed by \textit{spatial projection} into the robot's execution space. An overview of this pipeline is provided in Fig.~\ref{fig:problem_reform}. Here, $d_q$ and $d_a$ denote the dimensions of joint-space commands and action representations, respectively, and $c\in \mathbb{N}$ denotes the chunk length. Prior work has both theoretically and empirically demonstrated that $c > 1$ is critical for effective robotic learning.

\subsection{Formalization of Action Space Design}
\paragraph{Temporal Decoding}
Let $\mathbf{Z}_t = [\mathbf{z}_{t,1}, \dots, \mathbf{z}_{t,c}]^\top \in \mathbb{R}^{c \times d_a}$ 
denote the sequence of latent codes. These latents
are decoded into an action sequence $\tilde{\mathbf{A}}_t$ using either a \emph{zeroth-order}
(absolute) or \emph{first-order} (incremental) parameterization. In the zeroth-order case, each
latent directly specifies the corresponding action. In the first-order case, actions are defined
relative to a reference state $\mathbf{s}^{\mathrm{ref}}_t \in \mathbb{R}^{d_a}$:
\begin{equation}
\tilde{\mathbf{a}}_{t+k} =
\begin{cases}
\mathbf{s}^{\mathrm{ref}}_t + \mathbf{z}_{t,k}, & \text{Chunk}, \\[5pt]
\mathbf{s}^{\mathrm{ref}}_t + \displaystyle\sum_{j=1}^k \mathbf{z}_{t,j}, & \text{Step}.
\end{cases}
\label{eq:temporal_decoding}
\end{equation}
This decoding induces a linear temporal operator $\mathbf{M}_{\mathrm{time}}$ acting on the 
stacked latent sequence, whose structure is determined by the choice of temporal parameterization.

\begin{wrapfigure}{r}{0.43\linewidth}
    \centering
    \vspace{-10pt}
    \includegraphics[width=1.0\linewidth]{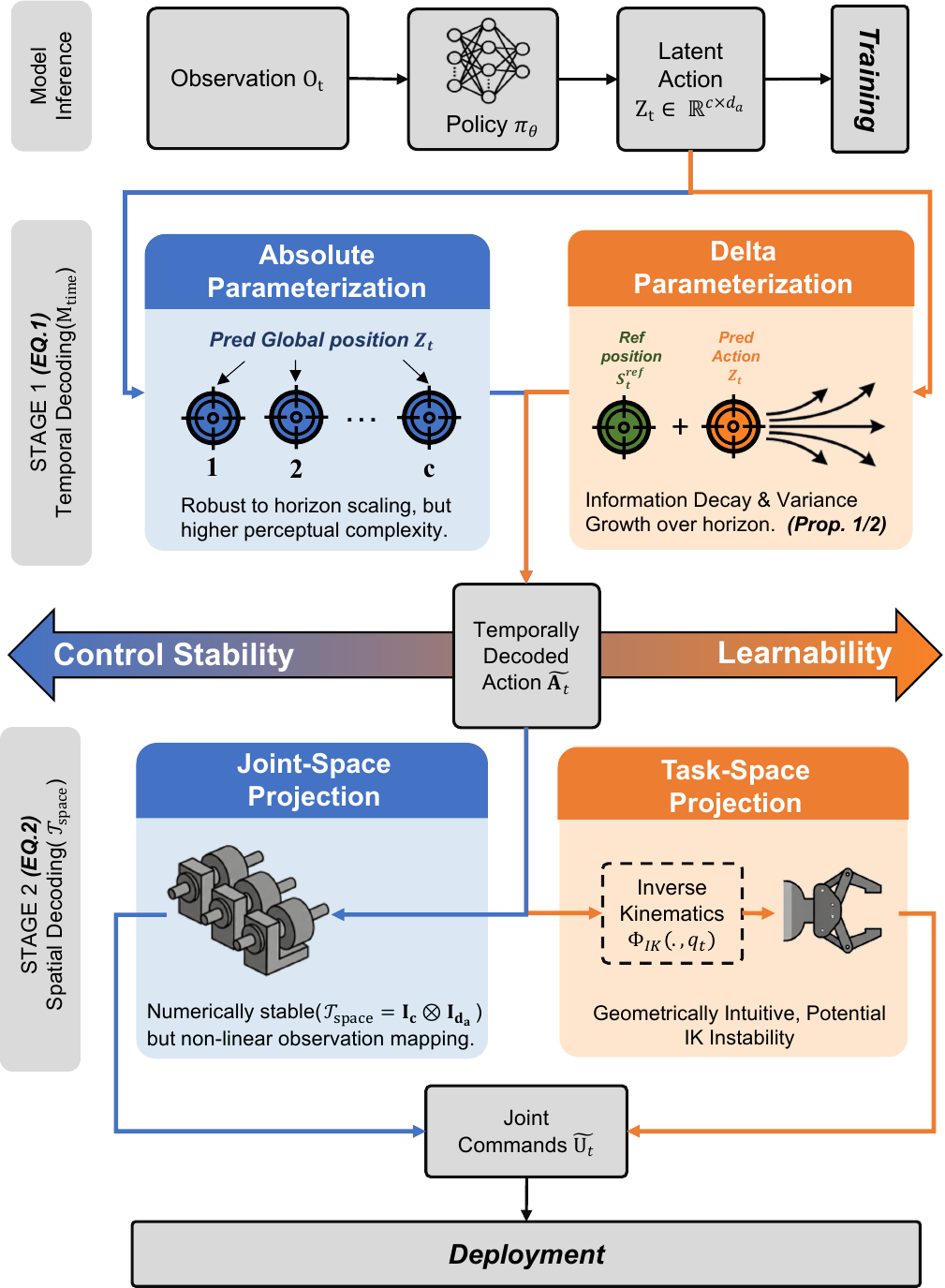}
    \caption{\small Problem reformulation.}
    \label{fig:problem_reform}
    \vspace{-20pt}
\end{wrapfigure}

\paragraph{Spatial Mapping}
Each decoded action $\tilde{\mathbf{a}}_{t+k}$ is projected into joint space via a spatial operator that optionally depends on the current joint configuration $\mathbf{q}_t \in \mathbb{R}^{d_q}$. We define
$\Phi_{\mathrm{IK}} : \mathbb{R}^{d_a} \times \mathbb{R}^{d_q} \to \mathbb{R}^{d_q},$which implements inverse kinematics mapping task-space targets to joint-space commands. The resulting joint command is
\begin{equation}
\mathbf{u}_{t+k} =
\begin{cases}
\tilde{\mathbf{a}}_{t+k}, & \text{joint-space}, \\[6pt]
\Phi_{\mathrm{IK}}(\tilde{\mathbf{a}}_{t+k},\, \mathbf{q}_t), & \text{task-space}.
\end{cases}
\end{equation}
This formulation allows the same temporal decoder to pair naturally with either joint or task-space action parameterizations, with $\Phi_{\mathrm{IK}}$ providing the necessary projection into executable joint commands.

\paragraph{Combined Linear Approximation and Structural Instability}
To understand how temporal and spatial parameterizations jointly affect execution stability, 
we analyze the local sensitivity of the full action transformation. Small perturbations in the 
latent sequence should not produce disproportionately large deviations in the executed commands. To make this dependence explicit, we 
linearize the composite mapping from latent codes $\mathbf{Z}_t$ to joint-space actions. Let $\mathcal{T}_{\mathrm{space}}$ denote the spatial projection and define its local Jacobian
\begin{equation}
    \mathbf{S}_t
= \left.\frac{\partial\, \mathcal{T}_{\mathrm{space}}}{\partial\, \tilde{\mathbf{a}}}\right|_{\tilde{\mathbf{a}}=\tilde{\mathbf{a}}_t}
\in \mathbb{R}^{d_q \times d_a}.
\end{equation}

For joint-space parameterizations $\mathbf{S}_t = \mathbf{I}_{d_q}$; for task-space mappings, 
$\mathbf{S}_t$ may represent a differential IK Jacobian or any differentiable projection.
Stacking across the horizon yields the block-diagonal operator 
$\mathbf{S}_{\mathrm{space}} = \mathbf{I}_k \otimes \mathbf{S}_t$. The resulting first-order approximation of the full transformation is
\begin{equation}
\mathcal{T}_{\mathrm{total}}
\;\approx\;
(\mathbf{I}_k \otimes \mathbf{S}_t)\,\mathbf{M}_{\mathrm{time}}.
\end{equation}
This expression shows that temporal and spatial representations contribute multiplicatively to 
overall stability: the sensitivity is governed jointly by the spectral properties of the spatial 
factor $\mathbf{S}_t$ and the temporal operator $\mathbf{M}_{\mathrm{time}}$.

\subsection{Research Question on Temporal Reparameterization}\label{app:temp_theory}

Temporal reparameterization determines how errors accumulate over the horizon, how rapidly information decays, and how difficult the resulting mapping is for a policy to learn from a single observation. To understand these trade-offs, we analyze the behavior of different temporal parameterizations below. Here, we recall the Proposition~\ref{prop:error_prop}, and given the detailed proof.

\begin{proposition}[Noise Amplification in Step-wise Integration]
The \textit{step-wise delta} representation corresponds to the linear temporal operator
$\mathbf{M}_{\mathrm{step}} = \mathbf{L}_k \otimes \mathbf{I}_{d_a}$,
where $\mathbf{L}_k \in \mathbb{R}^{k \times k}$ is the lower-triangular cumulative-sum matrix (with ones on and below the diagonal).
The spectral norm of this operator satisfies:
\[
\|\mathbf{M}_{\mathrm{step}}\|_2 = \sigma_{\max}(\mathbf{L}_k) \approx \frac{2k+1}{\pi}.
\]
This norm grows linearly with $c$ and strictly exceeds $1$ for all $c \ge 2$. 
\end{proposition}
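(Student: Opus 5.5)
The plan is to compute the singular values of $\mathbf{L}_k$ exactly and then read off all three claims: the value of the norm, its linear growth, and the bound $>1$.

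First, remove the Kronecker factor. Since $(\mathbf{L}_k\otimes\mathbf{I}_{d_a})^\top(\mathbf{L}_k\otimes\mathbf{I}_{d_a}) = (\mathbf{L}_k^\top\mathbf{L}_k)\otimes\mathbf{I}_{d_a}$, the singular values of $\mathbf{M}_{\mathrm{step}}$ are those of $\mathbf{L}_k$, each repeated $d_a$ times. Hence $\|\mathbf{M}_{\mathrm{step}}\|_2=\sigma_{\max}(\mathbf{L}_k)$. I read the horizon symbol $c$ in the statement as the chunk length $k$, consistent with Proposition~\ref{prop:error_prop}.

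Second, rather than diagonalising $\mathbf{L}_k^\top\mathbf{L}_k$ directly, I would work with the inverse. The cumulative-sum matrix has the bidiagonal difference operator $\mathbf{D}=\mathbf{L}_k^{-1}$ as its inverse, with $1$ on the diagonal and $-1$ on the subdiagonal. Therefore
\[
\sigma_{\max}(\mathbf{L}_k)^{-2}=\lambda_{\min}(\mathbf{D}^\top\mathbf{D}).
\]
The matrix $\mathbf{T}=\mathbf{D}^\top\mathbf{D}$ is tridiagonal, with $-1$ off the diagonal and diagonal entries $(2,\dots,2,1)$; the last entry is $1$ because the last column of $\mathbf{D}$ has a single nonzero.

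Third, diagonalise $\mathbf{T}$ with the standard sine ansatz $v_i=\sin(i\theta)$ and $\lambda=2-2\cos\theta=4\sin^2(\theta/2)$.
\begin{itemize}
\item The interior rows, and the first row (via the implicit boundary $v_0=0$), hold automatically.
\item The last row $-v_{k-1}+v_k=\lambda v_k$ is equivalent to $v_{k+1}=v_k$, i.e.\ $\sin((k+1)\theta)=\sin(k\theta)$.
\item This gives $\theta_j=\frac{(2j-1)\pi}{2k+1}$ for $j=1,\dots,k$. These are $k$ distinct values in $(0,\pi)$, so they exhaust the spectrum.
\end{itemize}
The smallest eigenvalue is $4\sin^2\!\big(\tfrac{\pi}{2(2k+1)}\big)$, and so
\[
\|\mathbf{M}_{\mathrm{step}}\|_2=\frac{1}{2\sin\!\big(\frac{\pi}{4k+2}\big)}.
\]
Using $\sin x\le x$, this is at least $\frac{2k+1}{\pi}$. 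Using $\sin x = x(1+O(x^2))$, it equals $\frac{2k+1}{\pi}\big(1+O(k^{-2})\big)$. This is the claimed approximation and linear growth. For the strict inequality, I would bypass the asymptotics: $\|\mathbf{L}_k e_1\|_2=\sqrt{k}>1$ for every $k\ge2$.

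The main obstacle is the third step. One must get the boundary condition of $\mathbf{T}$ right (the corner entry $1$ rather than $2$), and check that the ansatz produces $k$ linearly independent eigenvectors. Independence holds because the $\theta_j$ are distinct in $(0,\pi)$, so the $\lambda_j$ are distinct eigenvalues of a symmetric matrix.
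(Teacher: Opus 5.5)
Your proposal is correct and follows essentially the same route as the paper. Both arguments invert $\mathbf{L}_k$ to the difference operator, compute the spectrum of the tridiagonal Gram matrix, obtain $\|\mathbf{L}_k\|_2 = 1/\bigl(2\sin\frac{\pi}{4k+2}\bigr)$, and expand the sine (you use $\mathbf{D}^\top\mathbf{D}$ with the corner entry $1$ at the bottom, the paper uses $\mathbf{D}\mathbf{D}^\top$ with it at the top, which has the same spectrum); your version is more complete in that you derive the eigenvalues via the sine ansatz instead of citing them, handle the Kronecker factor explicitly, and prove the strict bound $\|\mathbf{L}_k\|_2\ge\sqrt{k}>1$, which the paper leaves implicit.
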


Consequently, step-wise integration necessarily amplifies input prediction noise as the horizon length increases, inducing structural instability.


\begin{proof}
We seek to bound the spectral norm $\|\mathbf{M}_{\mathrm{step}}\|_2$, where $\mathbf{M}_{\mathrm{step}} = \mathbf{L}_k$ is the $k \times k$ lower-triangular matrix of ones. Recall that the spectral norm is given by the largest singular value: $\|\mathbf{L}_k\|_2 = \sigma_{\max}(\mathbf{L}_k)$.

Directly computing the eigenvalues of $\mathbf{L}_k \mathbf{L}_k^T$ is complex. Instead, we analyze the inverse operator. The inverse of the cumulative sum matrix is the discrete difference operator, denoted as $\mathbf{D}_k = \mathbf{L}_k^{-1}$:
\[
\mathbf{D}_k = \begin{pmatrix}
1 & 0 & \cdots & 0 \\
-1 & 1 & \cdots & 0 \\
\vdots & \ddots & \ddots & \vdots \\
0 & \cdots & -1 & 1
\end{pmatrix}.
\]
Using the property of singular values, $\sigma_{\max}(\mathbf{L}_k) = \frac{1}{\sigma_{\min}(\mathbf{D}_k)}$. 
We compute the eigenvalues of the symmetric matrix $\mathbf{A} = \mathbf{D}_k \mathbf{D}_k^T$. The matrix $\mathbf{A}$ takes the form of a tridiagonal matrix (closely related to the discrete Laplacian):
\[
\mathbf{A} = \begin{pmatrix}
1 & -1 & 0 & \cdots \\
-1 & 2 & -1 & \cdots \\
\vdots & \ddots & \ddots & \vdots \\
0 & \cdots & -1 & 2
\end{pmatrix}.
\]
The eigenvalues $\lambda_i$ of this specific tridiagonal matrix are well-known in numerical analysis~\cite{smith1985numerical}:
\[
\lambda_i = 2 - 2 \cos\left(\frac{(2i-1)\pi}{2k+1}\right) = 4 \sin^2\left(\frac{(2i-1)\pi}{2(2k+1)}\right), \quad i=1,\dots,k.
\]
The singular values of $\mathbf{D}_k$ are the square roots of these eigenvalues: $\sigma_i(\mathbf{D}_k) = 2 \sin\left(\frac{(2i-1)\pi}{2(2k+1)}\right)$.
The minimum singular value corresponds to $i=1$:
\[
\sigma_{\min}(\mathbf{D}_k) = 2 \sin\left(\frac{\pi}{2(2k+1)}\right).
\]
Therefore, the spectral norm of the cumulative sum matrix is:
\[
\|\mathbf{L}_k\|_2 = \frac{1}{\sigma_{\min}(\mathbf{D}_k)} = \frac{1}{2 \sin\left(\frac{\pi}{4k+2}\right)}.
\]
Using the Taylor expansion $\sin(x) \approx x$ for small $x$ (valid as $k$ grows):
\[
\|\mathbf{L}_k\|_2 \approx \frac{1}{2 \cdot \frac{\pi}{4k+2}} = \frac{2k+1}{\pi}.
\]
This confirms that the error amplification factor grows linearly with the chunk size $k$, proving the proposition.
\end{proof}

\paragraph{Remark on Temporal Decorrelation for Long-Horizon Action} Although the Chunk-Delta and Absolute formulation 
$\tilde{\mathbf{a}}_{t+k} = \mathbf{s}_t + \mathbf{z}_{t,k}$ 
avoids numerical drift from cumulative integration, it remains fundamentally \emph{open-loop}: 
each offset $\mathbf{z}_{t,k}$ must be predicted from the single observation $\mathbf{o}_t$
without intermediate feedback. Let the regression or generative target be the displacement
$\Delta\mathbf{a}^*_k = \mathbf{a}^*_{t+k} - \mathbf{s}_t$.  
Any model can only approximate its conditional distribution 
$p(\Delta\mathbf{a}^*_k \mid \mathbf{o}_t)$, whose inherent uncertainty is characterized by the 
conditional entropy $H(\Delta\mathbf{a}^*_k \mid \mathbf{o}_t)$.  
As the prediction horizon increases, two mechanisms enlarge this uncertainty:

\begin{enumerate}
    \item \textbf{Variance Growth.}  
    The displacement magnitude $\|\Delta\mathbf{a}^*_k\|$ generally increases with~$k$, 
    expanding the support of the target distribution and increasing its entropy.

    \item \textbf{Information Decay.}  
    The mutual information $I(\mathbf{a}^*_{t+k}; \mathbf{o}_t)$ decreases with $k$, reflecting the diminishing predictive power of $\mathbf{o}_t$.  
    Consequently, the conditional entropy $H(\Delta\mathbf{a}^*_k \mid \mathbf{o}_t)$ must increase.
\end{enumerate}

\paragraph{Remark on Absolute Stability but Increased Learning Difficulty}
In contrast to Delta representations, Absolute parameterization is statistically robust to horizon scaling: the regression targets $\mathbf{a}_{t+k}$ remain confined to a bounded workspace, preventing variance growth. Moreover, Absolute predictions map observations directly to 
global task-space coordinates, avoiding the stale-reference effect inherent in relative corrections.
However, this robustness comes at the cost of substantially increased learning difficulty.  
Because the policy must implicitly infer full-scene geometry, global localization, and workspace-scale 
structure from raw observations, Absolute parameterization demands a more complex perceptual model and 
is often harder to train effectively than Delta-based variants.

\subsection{Research Question on Spatial Reparameterization}

Beyond temporal structure, the choice of spatial action manifold also imposes strong inductive 
biases on both the learnability and stability of the control policy. Whereas temporal 
parameterization governs how predictions evolve over the horizon, spatial parameterization 
determines the geometry of the control interface itself and thus fundamentally affects the 
numerical conditioning and perceptual complexity of the policy.
\textbf{Task-space} control offers a geometrically meaningful abstraction but introduces the 
potentially ill-conditioned pseudo-inverse Jacobian $\mathbf{J}^\dagger(\mathbf{q}_t)$ into 
$\mathcal{T}_{\mathrm{total}}$. In contrast, \textbf{Joint-space} control is numerically stable by 
construction, yet forces the policy to regress visual observations into a highly non-linear 
configuration space, significantly increasing perceptual complexity.

\subsection{Summary}
Together, these analyses reveal that action parameterization fundamentally governs a core 
trade-off between \textbf{learnability} (the functional complexity of the mapping 
$f : \mathbf{o}_t \to \mathbf{Z}_t$) and \textbf{stability} (the numerical conditioning of the 
final control transformation $\mathcal{T}_{\mathrm{total}}$). We structure our empirical 
study around two orthogonal axes: temporal structure and spatial manifold, aiming 
to identify action 
parameterizations that most effectively facilitate robust visuomotor policy learning.

\section{Detailed Statistics}
In this section, we provide all detailed statistics of our reported results.

\begin{table}[ht]
\centering
\caption{Task success rates under different data regimes}
\resizebox{\textwidth}{!}{
\scriptsize
\label{tab:real_world_data_tab}
\begin{tabular}{llcccccc}
\toprule
Method & Control & \# Data & Cube & Pick Cup & Pick and Place & Bimanual Transfer & Average \\
\midrule
\multirow{12}{*}{DP}
 & abs-ee     & \multirow{4}{*}{100} & 58.70 & 60.87 & 69.57 & 56.52 & 61.41 \\
 & rel-ee     &                      & 80.43 & 84.78 & 69.57 & 45.65 & 70.11 \\
 & abs-joint  &                      & 78.26 & 60.87 & 60.87 & 57.61 & 64.40 \\
 & rel-joint  &                      & 86.96 & 89.13 & 73.91 & 58.70 & 77.17 \\
\cmidrule(lr){2-8}
 & abs-ee     & \multirow{4}{*}{250} & 83.70 & 65.22 & 73.91 & 65.22 & 72.01 \\
 & rel-ee     &                      & 91.30 & 97.83 & 84.78 & 76.09 & 87.50 \\
 & abs-joint  &                      & 95.65 & 78.26 & 81.52 & 75.00 & 82.61 \\
 & rel-joint  &                      & 96.74 & 97.83 & 93.48 & 75.00 & 90.76 \\
\cmidrule(lr){2-8}
 & abs-ee     & \multirow{4}{*}{500} & 93.48 & 73.91 & 76.09 & 78.26 & 80.43 \\
 & rel-ee     &                      & 86.96 & 100.00 & 97.83 & 71.74 & 89.13 \\
 & abs-joint  &                      & 100.00 & 65.22 & 85.87 & 85.65 & 84.18 \\
 & rel-joint  &                      & 100.00 & 97.83 & 91.30 & 84.78 & 93.48 \\
\midrule
\multirow{12}{*}{ACT}
 & abs-ee     & \multirow{4}{*}{100} & 76.09 & 54.35 & 57.61 & 59.78 & 61.96 \\
 & rel-ee     &                      & 93.48 & 80.56 & 70.65 & 57.61 & 75.57 \\
 & abs-joint  &                      & 70.65 & 75.00 & 55.43 & 43.48 & 61.14 \\
 & rel-joint  &                      & 91.30 & 77.78 & 67.39 & 42.39 & 69.72 \\
\cmidrule(lr){2-8}
 & abs-ee     & \multirow{4}{*}{250} & 77.17 & 63.04 & 66.30 & 63.04 & 67.39 \\
 & rel-ee     &                      & 85.87 & 97.83 & 84.78 & 71.70 & 85.04 \\
 & abs-joint  &                      & 77.17 & 84.78 & 70.65 & 52.17 & 71.20 \\
 & rel-joint  &                      & 84.78 & 95.65 & 83.70 & 69.57 & 83.42 \\
\cmidrule(lr){2-8}
 & abs-ee     & \multirow{4}{*}{500} & 96.74 & 76.09 & 56.52 & 68.48 & 74.46 \\
 & rel-ee     &                      & 97.83 & 100.00 & 96.74 & 65.22 & 89.95 \\
 & abs-joint  &                      & 98.91 & 95.65 & 77.17 & 72.83 & 86.14 \\
 & rel-joint  &                      & 100.00 & 100.00 & 89.13 & 76.09 & 91.30 \\
\bottomrule
\end{tabular}
}
\end{table}

\begin{table}[ht!]
\centering
\caption{Task success rates under different training epochs}
\resizebox{\textwidth}{!}{
\scriptsize
\label{tab:real_world_ep_tab}
\begin{tabular}{llcccccc}
\toprule
Method & Control & \# Epochs & Cube & Pick Cup & Pick and Place & Bimanual Transfer & Average \\
\midrule

\multirow{16}{*}{ACT}
 & abs-ee    & \multirow{4}{*}{300}  & 66.30 & 52.38 & 66.30 & 61.96 & 61.74 \\
 & rel-ee    &                       & 73.91 & 92.86 & 91.30 & 67.39 & 81.37 \\
 & abs-joint &                       & 65.22 & 64.29 & 67.39 & 51.09 & 62.00 \\
 & rel-joint &                       & 82.61 & 95.24 & 92.39 & 64.13 & 83.59 \\
\cmidrule(lr){2-8}
 & abs-ee    & \multirow{4}{*}{600}  & 77.17 & 63.04 & 66.30 & 63.04 & 67.39 \\
 & rel-ee    &                       & 85.87 & 97.83 & 84.78 & 67.39 & 83.97 \\
 & abs-joint &                       & 77.17 & 84.78 & 70.65 & 52.17 & 71.20 \\
 & rel-joint &                       & 84.78 & 95.65 & 83.70 & 69.57 & 83.42 \\
\cmidrule(lr){2-8}
 & abs-ee    & \multirow{4}{*}{900}  & 80.43 & 65.22 & 58.70 & 71.74 & 69.02 \\
 & rel-ee    &                       & 85.87 & 92.39 & 82.61 & 66.30 & 81.79 \\
 & abs-joint &                       & 83.70 & 73.91 & 61.96 & 58.70 & 69.57 \\
 & rel-joint &                       & 82.61 & 95.65 & 89.13 & 68.48 & 83.97 \\
\cmidrule(lr){2-8}
 & abs-ee    & \multirow{4}{*}{1200} & 83.70 & 71.74 & 68.48 & 72.83 & 74.18 \\
 & rel-ee    &                       & 86.96 & 97.83 & 89.13 & 71.74 & 86.41 \\
 & abs-joint &                       & 86.96 & 86.96 & 68.48 & 66.30 & 77.17 \\
 & rel-joint &                       & 84.78 & 100.00 & 95.65 & 67.39 & 86.96 \\

\midrule

\multirow{16}{*}{DP}
 & abs-ee    & \multirow{4}{*}{300}  & 79.35 & 58.70 & 65.22 & 65.22 & 67.12 \\
 & rel-ee    &                       & 93.48 & 100.00 & 82.61 & 75.00 & 87.77 \\
 & abs-joint &                       & 90.22 & 65.22 & 67.39 & 63.04 & 71.47 \\
 & rel-joint &                       & 96.74 & 95.65 & 91.30 & 67.39 & 87.77 \\
\cmidrule(lr){2-8}
 & abs-ee    & \multirow{4}{*}{600}  & 83.70 & 65.22 & 73.91 & 65.22 & 72.01 \\
 & rel-ee    &                       & 91.30 & 97.83 & 84.78 & 76.09 & 87.50 \\
 & abs-joint &                       & 95.65 & 78.26 & 81.52 & 75.00 & 82.61 \\
 & rel-joint &                       & 96.74 & 97.83 & 93.48 & 75.00 & 90.76 \\
\cmidrule(lr){2-8}
 & abs-ee    & \multirow{4}{*}{900}  & 66.30 & 67.39 & 80.43 & 81.52 & 73.91 \\
 & rel-ee    &                       & 88.04 & 89.13 & 82.61 & 70.65 & 82.61 \\
 & abs-joint &                       & 88.04 & 73.91 & 75.00 & 73.91 & 77.72 \\
 & rel-joint &                       & 95.65 & 97.83 & 86.96 & 78.26 & 89.67 \\
\cmidrule(lr){2-8}
 & abs-ee    & \multirow{4}{*}{1200} & 80.43 & 69.57 & 78.26 & 79.35 & 76.90 \\
 & rel-ee    &                       & 95.65 & 95.65 & 92.39 & 71.74 & 88.86 \\
 & abs-joint &                       & 93.48 & 76.09 & 76.09 & 73.91 & 79.89 \\
 & rel-joint &                       & 97.83 & 100.00 & 100.00 & 79.35 & 94.29 \\
\bottomrule
\end{tabular}
}
\end{table}

\begin{table}[ht]
\centering
\caption{Task performance of ACT and DP averaged over three independent trials.}
\label{tab:performance_avg}
\small
\begin{tabular}{llcccc}
\toprule
\textbf{Task} & \textbf{Method} & \textbf{abs-ee} & \textbf{rel-ee} & \textbf{abs-qpos} & \textbf{rel-qpos} \\
\midrule
\multirow{2}{*}{Adjust Bottle} & ACT & $36.7 \pm 3.3$ & $33.3 \pm 6.7$ & $23.3 \pm 3.3$ & $40.0 \pm 0.0$ \\
 & DP & $40.0 \pm 11.5$ & $30.0 \pm 5.8$ & $20.0 \pm 5.8$ & $60.0 \pm 5.8$ \\
\midrule
\multirow{2}{*}{Dump Bin Bigbin} & ACT & $26.7 \pm 13.3$ & $50.0 \pm 20.0$ & $50.0 \pm 5.8$ & $43.3 \pm 6.7$ \\
 & DP & $16.7 \pm 6.7$ & $53.3 \pm 8.8$ & $56.7 \pm 13.3$ & $53.3 \pm 6.7$ \\
\midrule
\multirow{2}{*}{Grab Roller} & ACT & $23.3 \pm 14.5$ & $40.0 \pm 10.0$ & $56.7 \pm 14.5$ & $93.3 \pm 3.3$ \\
 & DP & $30.0 \pm 15.3$ & $66.7 \pm 18.6$ & $33.3 \pm 12.0$ & $80.0 \pm 5.8$ \\
\midrule
\multirow{2}{*}{Lift Pot} & ACT & $20.0 \pm 20.0$ & $3.3 \pm 3.3$ & $33.3 \pm 6.7$ & $23.3 \pm 8.8$ \\
 & DP & $3.3 \pm 3.3$ & $13.3 \pm 8.8$ & $13.3 \pm 3.3$ & $13.3 \pm 13.3$ \\
\midrule
\multirow{2}{*}{Move Playingcard Away} & ACT & $3.3 \pm 3.3$ & $16.7 \pm 12.0$ & $23.3 \pm 14.5$ & $36.7 \pm 3.3$ \\
 & DP & $10.0 \pm 5.8$ & $26.7 \pm 3.3$ & $16.7 \pm 8.8$ & $43.3 \pm 14.5$ \\
\midrule
\multirow{2}{*}{Open Laptop} & ACT & $10.0 \pm 0.0$ & $30.0 \pm 10.0$ & $30.0 \pm 5.8$ & $23.3 \pm 8.8$ \\
 & DP & $13.3 \pm 3.3$ & $16.7 \pm 8.8$ & $13.3 \pm 6.7$ & $16.7 \pm 8.8$ \\
\midrule
\multirow{2}{*}{Place Burger Fries} & ACT & $13.3 \pm 3.3$ & $20.0 \pm 10.0$ & $30.0 \pm 5.8$ & $46.7 \pm 6.7$ \\
 & DP & $13.3 \pm 3.3$ & $16.7 \pm 6.7$ & $23.3 \pm 3.3$ & $30.0 \pm 10.0$ \\
\midrule
\multirow{2}{*}{Place Container Plate} & ACT & $26.7 \pm 3.3$ & $40.0 \pm 10.0$ & $53.3 \pm 6.7$ & $60.0 \pm 5.8$ \\
 & DP & $26.7 \pm 6.7$ & $30.0 \pm 5.8$ & $50.0 \pm 5.8$ & $50.0 \pm 10.0$ \\
\midrule
\multirow{2}{*}{Press Stapler} & ACT & $13.3 \pm 3.3$ & $13.3 \pm 3.3$ & $23.3 \pm 8.8$ & $30.0 \pm 10.0$ \\
 & DP & $16.7 \pm 3.3$ & $23.3 \pm 3.3$ & $13.3 \pm 3.3$ & $30.0 \pm 10.0$ \\
\midrule
\multirow{2}{*}{Shake Bottle Horiz.} & ACT & $93.3 \pm 6.7$ & $86.7 \pm 3.3$ & $76.7 \pm 12.0$ & $66.7 \pm 3.3$ \\
 & DP & $90.0 \pm 5.8$ & $93.3 \pm 6.7$ & $83.3 \pm 12.0$ & $96.7 \pm 3.3$ \\
\midrule
\multirow{2}{*}{\textbf{Avg (Overall)}} & \textbf{ACT} & $\mathbf{26.7 \pm 3.3}$ & $\mathbf{33.3 \pm 6.1}$ & $\mathbf{40.0 \pm 0.6}$ & $\mathbf{46.3 \pm 1.9}$ \\
 & \textbf{DP} & $\mathbf{26.0 \pm 1.2}$ & $\mathbf{37.0 \pm 6.2}$ & $\mathbf{32.3 \pm 2.6}$ & $\mathbf{48.0 \pm 4.4}$ \\
\bottomrule
\end{tabular}
\end{table}

\end{document}